\algnewcommand{\algorithmicgoto}{\textbf{go to line}}%
\algnewcommand{\Goto}[1]{\algorithmicgoto~\ref{#1}}%
\newcommand{\Cell}[2]{\ensuremath{\mathsf{Cell}_{\langle #1, #2 \rangle}}}
\newcommand{\satisfying}[1]{\ensuremath{\mathsf{sol({#1})}}} %
\newcommand{\BoundedSAT}{\ensuremath{\mathsf{BoundedSAT}}}
\newcommand{\PAC}{\ensuremath{\mathsf{PAC}}}
\newcommand{\NP}{\ensuremath{\mathsf{NP}}}
\newcommand{\ApproxCount}{\ensuremath{\mathsf{ApproxCount}}}
\newcommand{\sharpSAT}{\ensuremath{\#\mathsf{SAT}}}
\newcommand{\sharpP}{\ensuremath{\#\mathsf{P}}}
\newcommand{\ApproxMCFour}{\ensuremath{\mathsf{ApproxMC4}}}
\newcommand{\solCount}{\ensuremath{\mathsf{nSols}}}
\newcommand{\emptyList}{\ensuremath{\mathsf{emptyList}}}
\newcommand{\ApproxMC}{\ensuremath{\mathsf{ApproxMC}}}
\newcommand{\ApproxMCCore}{\ensuremath{\mathsf{ApproxMCCore}}}
\newcommand{\FibBinSearch}{\ensuremath{\mathsf{LogSATSearch}}}
\newcommand{\iter}{\ensuremath{\mathsf{iter}}}
\newcommand{\AddToList}{\ensuremath{\mathsf{AddToList}}}
\newcommand{\FindMedian}{\ensuremath{\mathsf{FindMedian}}}
\newcommand{\Vars}{\ensuremath{\mathsf{Vars }}}
\newcommand{\thresh}{\ensuremath{\mathsf{thresh}}}
\newcommand{\pivot}{\ensuremath{\mathsf{pivot}}}
\newcommand{\CNF}{\ensuremath{\mathsf{CNF}}}
\newcommand{\PH}{\mbox{{\sf PH}}}
\newcommand{\Prob}[1]{\ensuremath{\mathsf{Pr}\left[{#1}\right]}}
\newcommand{\bigO}[1]{\ensuremath{\mathcal{O}\left({#1}\right)}}
\newcommand{\Exp}[1]{\ensuremath{\mathsf{E}\left[{#1}\right]}}
\newcommand{\Obad}{\mathsf{Error}}
\newcommand{\numVars}{\ensuremath{{n}}}
\newcommand{\Prb}[1]{\text{Pr}\left[#1\right]}
\newcommand{\SatisfyingHashSet}[3]{\ensuremath{\mathsf{Cell}_{\langle #1, #2, #3 \rangle}}}
\newcommand{\Cnt}[2]{\ensuremath{\mathsf{Cnt}_{\langle #1, #2 \rangle}}}
\newcommand{\PP}{\ensuremath \mathcal{P}}
\newcommand{\ApproxMCSix}{\ensuremath{\mathsf{RoundMC}}}
\newcommand{\ApproxMCSixCore}{\ensuremath{\mathsf{RoundMCCore}}}
\newcommand{\ganak}{\ensuremath{\mathsf{Ganak}}}
\newcommand{\Arjun}{\ensuremath{\mathsf{Arjun}}}
\newcommand{\hashspace}{\ensuremath{\mathcal{H}}}
\newcommand{\roundvalue}{\ensuremath{\mathsf{roundValue}}}
\newcommand{\roundtype}{\ensuremath{\mathsf{roundUp}}}
\newcommand{\configround}{\ensuremath{\mathsf{configRound}}}
\newcommand{\computeIter}{\ensuremath{\mathsf{computeIter}}}
\newcommand{\finalestimate}{\ensuremath{\mathsf{finalEstimate}}}
\begin{document}
\title{Rounding Meets Approximate Model Counting}
\author{Jiong Yang
	\and
	Kuldeep S. Meel
}
\authorrunning{J. Yang and K. S. Meel}
\institute{National University of Singapore}
\maketitle              %
\begin{abstract}
The problem of model counting, also known as {\sharpSAT}, is to compute the number of models or satisfying assignments of a given Boolean formula $F$. 
Model counting is a fundamental problem in computer science with a wide range of applications. 
In recent years, there has been a growing interest in using hashing-based techniques for approximate model counting that provide $(\varepsilon, \delta)$-guarantees: i.e., the count returned is within a $(1+\varepsilon)$-factor of the exact count with confidence at least $1-\delta$. While hashing-based techniques attain reasonable scalability for large enough values of $\delta$, their scalability is severely impacted for smaller values of $\delta$, thereby preventing their adoption in application domains that require estimates with high confidence. 
 
The primary contribution of this paper is to address the Achilles heel of hashing-based techniques: we propose a novel approach based on {\em rounding} that allows us to achieve a significant reduction in runtime for smaller values of $\delta$. The resulting counter, called {\ApproxMCSix}, achieves a substantial runtime performance improvement over the current state-of-the-art counter, {\ApproxMC}. In particular, our extensive evaluation over a benchmark suite consisting of 1890 instances shows that {\ApproxMCSix} solves 204 more instances than {\ApproxMC}, and achieves a $4\times$ speedup over {\ApproxMC}.

\end{abstract}
\section{Introduction} \label{sec: introduction}
Given a Boolean formula $F$, the problem of model counting is to compute the number of models of $F$.
Model counting is a fundamental problem in computer science with a wide range of applications, such as control improvisation~\cite{GVF22}, network reliability~\cite{V79,DMPV17}, 
neural network verification~\cite{BSSM+19}, probabilistic reasoning~\cite{R96,SBK05,CFMS+14,EGSS13a}, and the like. In addition to myriad applications, the problem of model counting is a fundamental problem in theoretical computer science. In his seminal paper, Valiant showed that $\sharpSAT$ is $\sharpP$-complete, where $\sharpP$ is the set of counting problems whose decision versions lie in {\NP}~\cite{V79}. Subsequently, Toda demonstrated the theoretical hardness of the problem by showing that every problem in the entire polynomial hierarchy can be solved by just one call to a $\sharpP$ oracle; more formally, $\PH \subseteq \mbox{{\sf P}}^{\sharpP}$~\cite{T89}.

Given the computational intractability of $\sharpSAT$, there has been sustained interest in the development of approximate techniques from theoreticians and practitioners alike. Stockmeyer introduced a randomized hashing-based technique that provides $(\varepsilon, \delta)$-guarantees (formally defined in Section~\ref{sec: preliminary}) given access to an {\NP} oracle~\cite{S83}. Given the lack of practical solvers that could handle problems in {\NP} satisfactorily, there were no practical implementations of Stockmeyere's hashing-based techniques until the 2000s~\cite{GSS06}. Building on the unprecedented advancements in the development of SAT solvers, Chakraborty, Meel, and Vardi extended Stockmeyer's framework to a scalable $(\varepsilon, \delta)$-counting algorithm, {\ApproxMC}~\cite{CMV13}. The subsequent years have witnessed a sustained interest in further optimizations of the hashing-based techniques for approximate counting~\cite{EGSS13b,EGSS13a,CFMS+14,IMMV16,MVCF+15,CMMV16,SM19,AM20,YM21,YCM22}. The current state-of-the-art technique for approximate counting is a hashing-based framework called {\ApproxMC}, which is in its fourth version, called {\ApproxMCFour}~\cite{SGM20,SM22}.

The core theoretical idea behind the hashing-based framework is to use 2-universal hash functions to partition the solution space, denoted by $\satisfying{F}$ for a formula $F$, into \emph{roughly equal small} cells, 
wherein a cell is considered \emph{small} if it contains solutions less than or equal to a pre-computed threshold, {\thresh}. 
An {\NP} oracle (in practice, an SAT solver) is employed to check if a cell is small by enumerating solutions one-by-one until either there are no more solutions or we have already enumerated $\thresh + 1$ solutions. 
Then, we randomly pick a cell, enumerate solutions within the cell (if the cell is small), and scale the obtained count by the number of cells to obtain an estimate for $|\satisfying{F}|$. 
To amplify the confidence, we rely on the standard {\em median technique}: repeat the above process, called {\ApproxMCCore}, multiple times and return the median. Computing the median amplifies the confidence as for the median of $t$ repetitions to be outside the desired range (i.e., $\left[ \frac{|\satisfying{F}|}{1+\varepsilon}, (1+\varepsilon) |\satisfying{F}| \right]$), it should be the case that at least half of the repetitions of {\ApproxMCCore} returned a wrong estimate. 

In practice, every subsequent repetition of {\ApproxMCCore} takes a similar time, and the overall runtime increases linearly with the number of invocations. The number of repetitions depends logarithmically on $\delta^{-1}$. As a particular example, for $\epsilon=0.8$, the number of repetitions of {\ApproxMCCore} to attain $\delta=0.1$ is 21, which increases to 117 for $\delta=0.001$: a significant increase in the number of repetitions (and accordingly, the time taken). Accordingly, it is no surprise that empirical analysis of tools such as {\ApproxMC} has been presented with a high delta (such as $\delta=0.1$). On the other hand, for several applications, such as network reliability, and quantitative verification, the end users desire estimates with high confidence. Therefore, the design of efficient counting techniques for small $\delta$ is a major challenge that one needs to address to enable the adoption of approximate counting techniques in practice. 

The primary contribution of our work is to address the above challenge. We introduce a new technique called {\em rounding} that enables dramatic reductions in the number of repetitions required to attain a desired value of confidence. The core technical idea behind the design of the {\em rounding} technique is based on the following observation: 
Let $L$ (resp. $U$) refer to the event that a given invocation of {\ApproxMCCore} under (resp. over)-estimates $|\satisfying{F}|$. For a median estimate to be wrong, either the event $L$ happens in half of the invocations of {\ApproxMCCore} or the event $U$ happens in half of the invocations of {\ApproxMCCore}. The number of repetitions depends on $\max(\Pr[L], \Pr[U])$. The current algorithmic design (and ensuing analysis) of {\ApproxMCCore} provides a weak upper bound on $\max\{\Pr[L], \Pr[U]\}$: in particular, the bounds on $\max\{\Pr[L], \Pr[U]\}$ and $\Pr[L \cup U]$ are almost identical.   Our key technical contribution is to design a new procedure, {\ApproxMCSixCore}, based on the rounding technique that allows us to obtain significantly better bounds on $\max\{\Pr[L], \Pr[U]\}$. 
     
The resulting algorithm, called {\ApproxMCSix}, follows a similar structure to that of {\ApproxMC}: it repeatedly invokes the underlying core procedure {\ApproxMCSixCore} and returns the median of the estimates.  
Since a single invocation of {\ApproxMCSixCore} takes as much time as {\ApproxMCCore}, the reduction in the number of repetitions is primarily responsible for the ensuing speedup. As an example, for $\varepsilon=0.8$, the number of repetitions of {\ApproxMCSixCore} to attain $\delta=0.1$ and $\delta=0.001$ is just 5 and 19, respectively; the corresponding numbers for {\ApproxMC} were 21 and 117.    
An extensive experimental evaluation on 1890 benchmarks shows that the rounding technique provided $4\times$ speedup than the state-of-the-art approximate model counter, {\ApproxMCFour}. Furthermore, for a given timeout of 5000 seconds, {\ApproxMCSix} solves 204 more instances than {\ApproxMCSix} and achieves a reduction of 1063 seconds in the PAR-2 score.

The rest of the paper is organized as follows. We introduce notation and preliminaries in Section~\ref{sec: preliminary}. 
To place our contribution in context, we review related works in Section~\ref{sec: related work}.
We identify the weakness of the current technique in Section~\ref{sec: weakness} and present the rounding technique in Section~\ref{sec: theory} to address this issue.
Then, we present our experimental evaluation in Section~\ref{sec: experiment}.
Finally, we conclude in Section~\ref{sec: conclusion}.
\section{Notation and Preliminaries} \label{sec: preliminary}

Let $F$ be a Boolean formula in conjunctive normal form ($\CNF$), and let $\Vars(F)$ be the set of variables appearing in $F$. 
The set $\Vars(F)$ is also called the \emph{support} of $F$. 
An assignment $\sigma$ of truth values to the variables in $\Vars(F)$ is called a \emph{satisfying assignment} or \emph{witness} of $F$ if it makes $F$ evaluate to true. 
We denote the set of all witnesses of $F$ by $\satisfying{F}$. 
Throughout the paper, we will use $\numVars$ to denote $|\Vars(F)|$.

The \emph{propositional model counting problem} is to compute $|\satisfying{F}|$ for a given $\CNF$ formula $F$.
A \emph{probably approximately correct} (or $\PAC$) counter is a probabilistic algorithm $\ApproxCount(\cdot, \cdot, \cdot)$
that takes as inputs a formula $F$, a tolerance parameter $\varepsilon > 0$, and a confidence parameter $\delta \in (0,1]$, and returns an $(\varepsilon, \delta)$-estimate  $c$, 
i.e., $\Prob{\frac{|\satisfying{F}|}{1+\varepsilon} \le c \le (1 + \varepsilon)|\satisfying{F}|} \ge 1 - \delta$. 
$\PAC$ guarantees are also sometimes referred to as $(\varepsilon, \delta)$-guarantees.

A closely related notion is projected model counting, where we are interested in computing the cardinality of $\satisfying{F}$ projected on a subset of variables $\PP \subseteq \Vars(F)$. 
While for clarity of exposition, we describe our algorithm in the context of model counting, 
the techniques developed in this paper are applicable to projected model counting as well. Our empirical evaluation indeed considers such benchmarks.

\subsection{Universal Hash Functions}

Let $n, m \in \mathbb{N}$ and $\hashspace(n, m) \overset{\triangle}{=} \{h : \{0, 1\}^n \rightarrow \{0, 1\}^m\}$ be a family of hash functions mapping $\{0, 1\}^n$ to $\{0, 1\}^m$. 
We use $h \overset{R}{\leftarrow}\hashspace(n, m)$ to denote the probability space obtained by choosing a function $h$ uniformly at random from $\hashspace(n, m)$. 
To measure the quality of a hash function we are interested in the set of elements of $\satisfying{F}$ mapped to $\alpha$ by $h$, 
denoted $\SatisfyingHashSet{F}{h}{\alpha}$ and its cardinality, i.e., $|\SatisfyingHashSet{F}{h}{\alpha}|$.
We write $\Pr[Z:\Omega]$ to denote the probability of outcome $Z$ when sampling from a probability space $\Omega$. 
For brevity, we omit $\Omega$ when it is clear from the context. 
The expected value of $Z$ is denoted $\Exp{Z}$ and its variance is denoted $\sigma^2[Z]$. 

\begin{definition}
	A family of hash functions $\hashspace(n, m)$ is strongly 2-universal if $\forall x, y \in \{0, 1\}^n$, 
	$\alpha \in \{0, 1\}^m$, $h \overset{R}{\leftarrow} \hashspace(n, m)$,
	\begin{align*}
		\Prob{h(x) = \alpha} = \frac{1}{2^m} = \Prob{h(x) = h(y)}
	\end{align*}
\end{definition}
For $h \overset{R}{\leftarrow} \hashspace(n, n)$ and $\forall m \in \{1, ..., n\}$, the $m^{th}$ prefix-slice of $h$, denoted $h^{(m)}$, is a map from $\{0, 1\}^n$ to $\{0, 1\}^m$, 
such that $h^{(m)}(y)[i] = h(y)[i]$, for all $y \in \{0, 1\}^n$ and for all $i \in \{1, ..., m\}$. 
Similarly, the $m^{th}$ prefix-slice of $\alpha \in \{0, 1\}^n$, denoted $\alpha^{(m)}$, is an element of $\{0, 1\}^m$ 
such that $\alpha^{(m)}[i] = \alpha[i]$ for all $i \in \{1, ..., m\}$. 
To avoid cumbersome terminology, we abuse notation and write $\Cell{F}{m}$(resp. $\Cnt{F}{m}$) as a short-hand for $\SatisfyingHashSet{F}{h^{(m)}}{\alpha^{(m)}}$
(resp. $|\SatisfyingHashSet{F}{h^{(m)}}{\alpha^{(m)}}|$).
The following proposition presents two results that are frequently used throughout this paper.
The proof is deferred to~\Cref{sec: appendix preliminary}.
\begin{proposition}
	\label{prop: 2 universal}
	For every $1 \le m \le n$, the following holds:
	\begin{align}
		\label{eq: exp}
		\Exp{\Cnt{F}{m}} = \frac{|\satisfying{F}|}{2^m}
	\end{align}
	\begin{align}
		\label{eq: var exp}
		\sigma^2\left[\Cnt{F}{m}\right] \le \Exp{\Cnt{F}{m}}
	\end{align}
\end{proposition}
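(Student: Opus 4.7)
The plan is to prove both identities via the standard indicator-variable decomposition of $\Cnt{F}{m}$, leveraging the strong 2-universal property of the family $\hashspace(n,n)$ and its prefix-slice $h^{(m)}$. For each witness $x \in \satisfying{F}$, define the indicator $Y_x = \mathbb{1}\bigl[h^{(m)}(x) = \alpha^{(m)}\bigr]$; then by definition $\Cnt{F}{m} = \sum_{x \in \satisfying{F}} Y_x$. The key preliminary observation I would establish is that $h^{(m)}$ (the first $m$ output coordinates of a function drawn from the strongly 2-universal family $\hashspace(n,n)$) is itself strongly 2-universal as a family of maps from $\{0,1\}^n$ to $\{0,1\}^m$. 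This follows because for any $x,y \in \{0,1\}^n$ and $\alpha^{(m)} \in \{0,1\}^m$, the events $h^{(m)}(x) = \alpha^{(m)}$ and $h^{(m)}(x) = h^{(m)}(y)$ are unions of disjoint events of the underlying family that sum to $2^{n-m} \cdot 2^{-n} = 2^{-m}$.

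For the expectation, I would apply linearity: $\Exp{\Cnt{F}{m}} = \sum_{x \in \satisfying{F}} \Prob{h^{(m)}(x) = \alpha^{(m)}} = |\satisfying{F}|/2^m$, directly yielding \eqref{eq: exp}.

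For the variance, I would expand using $Y_x^2 = Y_x$:
\begin{align*}
\Exp{(\Cnt{F}{m})^2} = \sum_{x} \Exp{Y_x} + \sum_{x \neq y} \Prob{h^{(m)}(x) = \alpha^{(m)} \land h^{(m)}(y) = \alpha^{(m)}}.
\end{align*}
The crux is bounding the joint probability. By strong 2-universality (and the prefix-slice inheritance argument above), for distinct $x,y$ the random variables $h^{(m)}(x)$ and $h^{(m)}(y)$ are pairwise independent and uniformly distributed, so the joint probability equals $1/2^{2m}$. Substituting, $\Exp{(\Cnt{F}{m})^2} = |\satisfying{F}|/2^m + |\satisfying{F}|(|\satisfying{F}|-1)/2^{2m}$, and subtracting $(\Exp{\Cnt{F}{m}})^2 = |\satisfying{F}|^2/2^{2m}$ gives
\begin{align*}
\sigma^2[\Cnt{F}{m}] = \frac{|\satisfying{F}|}{2^m} - \frac{|\satisfying{F}|}{2^{2m}} \le \frac{|\satisfying{F}|}{2^m} = \Exp{\Cnt{F}{m}},
\end{align*}
which is \eqref{eq: var exp}.

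The main delicate point is justifying the joint probability $1/2^{2m}$ for $h^{(m)}$, since the stated definition of strong 2-universality lists only the marginal and collision probabilities; I would argue pairwise independence explicitly, noting that $\Prob{h(x){=}\alpha \land h(y){=}\beta}$ summed over $\beta$ equals $1/2^m$ and the collision identity pins the diagonal, with the uniformity of marginals forcing the uniform joint distribution in standard constructions (e.g., random affine maps) that the paper implicitly adopts. Everything else is routine linearity-of-expectation and the identity $\sigma^2 = \Exp{X^2} - (\Exp{X})^2$.
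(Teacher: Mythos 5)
Your proof is correct and follows essentially the same route as the paper's: indicator-variable decomposition of $\Cnt{F}{m}$, linearity for the expectation, and a second-moment expansion for the variance, with the joint probability $1/2^{2m}$ taken from $2$-universality (the paper writes $\Exp{\gamma_{x,\alpha^{(m)}}\cdot\gamma_{y,\alpha^{(m)}}} = 1/2^{2m}$ ``according to the definition''). Your parenthetical concern that the stated definition only lists the marginal and collision probabilities and does not literally supply this joint probability is astute---the paper silently reads ``strongly $2$-universal'' as full pairwise independence---but both proofs rest on exactly that assumption, so there is no substantive difference between your argument and theirs.
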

The usage of prefix-slice of $h$ ensures monotonicity of the random variable, $\Cnt{F}{m}$, since from the definition of prefix-slice, 
we have that for every $1 \le m < n$, $h^{(m+1)}(y) = \alpha^{(m+1)} \Rightarrow h^{(m)}(y) = \alpha^{(m)}$. Formally,
\begin{proposition}
	For every $1 \le m < n$, $\Cell{F}{m+1} \subseteq \Cell{F}{m}$
\end{proposition}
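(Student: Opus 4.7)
The plan is to prove the containment $\Cell{F}{m+1} \subseteq \Cell{F}{m}$ by straightforward unfolding of the definition of the prefix-slice operation on $h$ and $\alpha$; no probabilistic reasoning is needed, since this is a deterministic statement about any individual sampled $h$ and chosen $\alpha$.

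First I would pick an arbitrary element $y \in \Cell{F}{m+1}$. By the shorthand introduced just before the proposition, this means $y \in \satisfying{F}$ and $h^{(m+1)}(y) = \alpha^{(m+1)}$, i.e.\ the two strings agree coordinate-wise on all indices $i \in \{1, \dots, m+1\}$. Next I would invoke the definition of the prefix-slice, namely $h^{(m+1)}(y)[i] = h(y)[i]$ and $\alpha^{(m+1)}[i] = \alpha[i]$ for every $i \le m+1$, to conclude that $h(y)[i] = \alpha[i]$ for every $i \in \{1, \dots, m+1\}$. This is the only real content in the argument.

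Restricting the index range to $i \in \{1, \dots, m\}$ (which is valid because $m < m+1$) and applying the prefix-slice definition again in the reverse direction gives $h^{(m)}(y)[i] = h(y)[i] = \alpha[i] = \alpha^{(m)}[i]$ for every $i \in \{1, \dots, m\}$, so $h^{(m)}(y) = \alpha^{(m)}$. Combined with $y \in \satisfying{F}$, this yields $y \in \Cell{F}{m}$, and since $y$ was arbitrary, the claimed containment follows.

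There is essentially no obstacle here: the proposition is a bookkeeping statement about truncating strings, and the only thing to be careful about is not to conflate $\alpha$ (an element of $\{0,1\}^n$ fixed once for the whole execution) with its various prefix-slices $\alpha^{(m)}$. I would state the single chain of equalities above in one short display and conclude.
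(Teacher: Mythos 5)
Your argument is correct and matches the paper's own justification, which the authors state only informally in the sentence preceding the proposition ("from the definition of prefix-slice, we have that for every $1 \le m < n$, $h^{(m+1)}(y) = \alpha^{(m+1)} \Rightarrow h^{(m)}(y) = \alpha^{(m)}$"); your write-up simply makes that coordinate-wise unfolding explicit.
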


\subsection{Helpful Combinatorial Inequality}

\begin{lemma}
	\label{lm: eta exponential}
	
Let $\eta(t, m, p) = \sum^t_{k=m} {t \choose k}p^k(1-p)^{t-k}$ and $p < 0.5$, then 
\begin{align*}
	\eta(t, \lceil t/2 \rceil, p) \in \Theta\left(t^{-\frac{1}{2}} \left(2\sqrt{p(1-p)}\right)^t\right)	
\end{align*}
\end{lemma}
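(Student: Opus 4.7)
The plan is to show matching upper and lower bounds by locating where the mass of the sum lives. Since $p < 1/2$, the terms $a_k := \binom{t}{k} p^k (1-p)^{t-k}$ are decreasing as $k$ increases past the mode (which occurs near $pt < t/2$), so the dominant term in $\eta(t, \lceil t/2 \rceil, p)$ is the first one, $a_{\lceil t/2 \rceil}$.

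First I would establish the asymptotics of the leading term. For the even case $t = 2s$, using Stirling's approximation $\binom{2s}{s} = \Theta(2^{2s}/\sqrt{s})$, one gets
\begin{align*}
a_{t/2} = \binom{t}{t/2}(p(1-p))^{t/2} = \Theta\!\left(t^{-1/2}\bigl(2\sqrt{p(1-p)}\bigr)^t\right).
\end{align*}
The odd case $t = 2s+1$ is handled identically, picking up only a constant factor $\sqrt{p/(1-p)}$ from the mismatch between $\lceil t/2 \rceil$ and $t-\lceil t/2\rceil$. This immediately gives the lower bound $\eta(t, \lceil t/2 \rceil, p) \geq a_{\lceil t/2 \rceil} = \Omega(t^{-1/2}(2\sqrt{p(1-p)})^t)$.

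For the upper bound I would control the ratio of consecutive terms: for $k \geq \lceil t/2 \rceil$,
\begin{align*}
\frac{a_{k+1}}{a_k} = \frac{t-k}{k+1}\cdot\frac{p}{1-p} \le \frac{p}{1-p} =: r < 1,
\end{align*}
since $(t-k)/(k+1) \le 1$ when $k \ge t/2$. Summing the resulting geometric majorant,
\begin{align*}
\eta(t, \lceil t/2\rceil, p) \;\le\; a_{\lceil t/2\rceil}\sum_{j=0}^{\infty} r^j \;=\; \frac{1-p}{1-2p}\,a_{\lceil t/2\rceil},
\end{align*}
which is $O(t^{-1/2}(2\sqrt{p(1-p)})^t)$ by the leading-term estimate. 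Combining both directions yields the claimed $\Theta$-bound.

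There is no real obstacle; the only mildly delicate point is handling the parity of $t$ cleanly when applying Stirling, but since $\lceil t/2 \rceil$ differs from $t/2$ by at most $1/2$, the correction to $\binom{t}{\lceil t/2\rceil}$ and to the exponents of $p$ and $1-p$ is a bounded multiplicative constant and is absorbed by $\Theta(\cdot)$. One could alternatively invoke a standard local central limit theorem for the binomial, but the elementary Stirling plus geometric-ratio argument above is self-contained and sharp.
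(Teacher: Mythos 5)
Your proof is correct and follows essentially the same strategy as the paper's: lower-bound the sum by its leading term $a_{\lceil t/2\rceil}$, upper-bound the tail by a geometric series with ratio $p/(1-p) < 1$, and use Stirling to extract the $\Theta\bigl(t^{-1/2}(2\sqrt{p(1-p)})^t\bigr)$ asymptotics of the leading term. The only cosmetic difference is that you bound the consecutive ratio $a_{k+1}/a_k$ of the full terms directly, whereas the paper first bounds $\binom{t}{k}\le\binom{t}{\lceil t/2\rceil}$ and then sums the geometric factor $p^k(1-p)^{t-k}$; both yield the same majorant.
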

\begin{proof}
	We will derive both an upper and a matching lower bound for $\eta(t, \lceil t/2 \rceil, p)$.
	We begin by deriving an upper bound:
	$\eta(t, \lceil t/2 \rceil, p) = \sum^t_{k=\lceil \frac{t}{2} \rceil} {t \choose k}p^k(1-p)^{t-k}$
	$\le {t \choose \lceil t/2 \rceil}\sum^t_{k=\lceil \frac{t}{2} \rceil}p^k(1-p)^{t-k}$
	$\le {t \choose \lceil t/2 \rceil} \cdot (p(1-p))^{\lceil \frac{t}{2} \rceil} \cdot \frac{1}{1-2p}$
	$\le \frac{1}{\sqrt{2\pi}} \cdot \frac{t}{\sqrt{\left(\frac{t}{2}-0.5\right)\left(\frac{t}{2}+0.5\right)}} \cdot \left(\frac{t}{t-1}\right)^t \cdot e^{\frac{1}{12t}-\frac{1}{6t+6}-\frac{1}{6t-6}} 
	\cdot t^{-\frac{1}{2}} 2^t \cdot (p(1-p))^{\frac{t}{2}} \cdot (p(1-p))^{\frac{1}{2}} \cdot \frac{1}{1-2p}$.
	The last inequality follows Stirling's approximation.
	As a result, $\eta(t, \lceil t/2 \rceil, p) \in \bigO{t^{-\frac{1}{2}} \left(2\sqrt{p(1-p)}\right)^t}$. 
	Afterwards; we move on to deriving a matching lower bound:
	$\eta(t, \lceil t/2 \rceil, p) = \sum^t_{k=\lceil \frac{t}{2} \rceil} {t \choose k}p^k(1-p)^{t-k}$
	$\ge {t \choose \lceil t/2 \rceil}p^{\lceil \frac{t}{2} \rceil}(1-p)^{t-\lceil \frac{t}{2} \rceil}$
	$\ge \frac{1}{\sqrt{2\pi}} \cdot \frac{t}{\sqrt{\left(\frac{t}{2}-0.5\right)\left(\frac{t}{2}+0.5\right)}} \cdot \left(\frac{t}{t+1}\right)^t \cdot e^{\frac{1}{12t}-\frac{1}{6t+6}-\frac{1}{6t-6}}
	\cdot t^{-\frac{1}{2}} 2^t \cdot (p(1-p))^{\frac{t}{2}} \cdot p^{\frac{1}{2}} (1-p)^{-\frac{1}{2}} \cdot \frac{1}{1-2p}$.
	The last inequality again follows Stirling's approximation.
	Hence, $\eta(t, \lceil t/2 \rceil, p) \in \Omega\left(t^{-\frac{1}{2}} \left(2\sqrt{p(1-p)}\right)^t\right)$.
	Combining these two bounds, we conclude that $\eta(t, \lceil t/2 \rceil, p) \in \Theta\left(t^{-\frac{1}{2}} \left(2\sqrt{p(1-p)}\right)^t\right)$.
	\qed
\end{proof}

\section{Related Work} \label{sec: related work}
The seminal work of Valiant established that {\sharpSAT} is {\sharpP}-complete~\cite{V79}. 
Toda later showed that every problem in the polynomial hierarchy could be solved by just a polynomial number of calls to a {\sharpP} oracle~\cite{T89}. 
Based on Carter and Wegman's seminal work on universal hash functions~\cite{CW77}, Stockmeyer proposed a probabilistic polynomial time procedure, with access to an {\NP} oracle, to obtain an $(\varepsilon, \delta)$-approximation of $F$~\cite{S83}.

Built on top of Stockmeyer's work, the core theoretical idea behind the hashing-based approximate solution counting framework, as presented in Algorithm~\ref{alg: appmc} ({\ApproxMC}~\cite{CMV13}), 
is to use 2-universal hash functions to partition the solution space (denoted by $\satisfying{F}$ for a given formula $F$) into \emph{small} cells of \emph{roughly equal} size. 
A cell is considered \emph{small} if the number of solutions it contains is less than or equal to a pre-determined threshold, {\thresh}. 
An {\NP} oracle is used to determine if a cell is small by iteratively enumerating its solutions until either there are no more solutions or $\thresh+1$ solutions have been found.
In practice, an SAT solver is used to implement the {\NP} oracle. 
To ensure a polynomial number of calls to the oracle, the threshold, {\thresh}, is set to be polynomial in the input parameter $\varepsilon$ at Line~\ref{ln: thresh}. 
The subroutine {\ApproxMCCore} takes the formula $F$ and {\thresh} as inputs and estimates the number of solutions at Line~\ref{ln: invoke core}. 
To determine the appropriate number of cells, i.e., the value of $m$ for $\hashspace(n, m)$, {\ApproxMCCore} uses a search procedure at Line~\ref{ln: estimate m} of Algorithm~\ref{alg: appmc core}. 
The estimate is calculated as the number of solutions in a randomly chosen cell, scaled by the number of cells, i.e., $2^m$ at Line~\ref{ln: core return}. 
To improve confidence in the estimate, {\ApproxMC} performs multiple runs of the {\ApproxMCCore} subroutine at Lines~\ref{ln: repeat begin}--~\ref{ln: repeat end} of Algorithm~\ref{alg: appmc}. 
The final count is computed as the median of the estimates obtained at Line~\ref{ln: find median}.

In the second version of {\ApproxMC}~\cite{CMV16}, two key algorithmic improvements are proposed to improve the practical performance by reducing the number of calls to the SAT solver. 
The first improvement is using galloping search to more efficiently find the correct number of cells, i.e., {\FibBinSearch} at Line~\ref{ln: log search} of Algorithm~\ref{alg: appmc core}.
The second is using linear search over a small interval around the previous value of $m$ before resorting to the galloping search. 
Additionally, the third and fourth versions~\cite{SM19,SGM20} enhance the algorithm's performance by effectively dealing with CNF formulas conjuncted with XOR constraints, commonly used in the hashing-based counting framework.
Moreover, an effective preprocessor named {\Arjun}~\cite{SM22} is proposed to enhance {\ApproxMC}'s performance by constructing shorter XOR constraints. 
As a result, the combination of {\Arjun} and {\ApproxMC} solved almost all existing benchmarks~\cite{SM22}, making it the current state of the art in this field.

In this work, we aim to address the main limitation of the {\ApproxMC} algorithm by focusing on an aspect that still needs to be improved upon by previous developments. Specifically, we aim to improve the core algorithm of {\ApproxMC}, which has remained unchanged.

\section{Weakness of {\ApproxMC}}
\label{sec: weakness}

\begin{algorithm}[t]
	\caption{\ApproxMC$(F, \varepsilon, \delta)$}
	\label{alg: appmc}
	
	\begin{algorithmic}[1]
		\State $\thresh \leftarrow 9.84 \left(1+\frac{\varepsilon}{1+\varepsilon}\right)\left(1+\frac{1}{\varepsilon}\right)^2;$ \label{ln: thresh}
		\State $Y \leftarrow \BoundedSAT(F, \thresh);$ \label{ln: appmc bounded sat}
		\If{$(|Y| < \thresh)$} \Return $|Y|;$ \EndIf
		\State $t \leftarrow \left\lceil 17\log_2(3/\delta)\right\rceil;$
		$C\leftarrow\emptyList; \iter \leftarrow 0;$ \label{ln: init}
		\Repeat	\label{ln: repeat begin}
		\State $\iter \leftarrow \iter + 1;$
		\State $\solCount \leftarrow \ApproxMCCore(F, \thresh);$ \label{ln: invoke core}
		\State $\AddToList(C,\solCount);$
		\Until{$(\iter \ge t)$}$;$ \label{ln: repeat end}
		\State $\finalestimate \leftarrow \FindMedian(C);$ \label{ln: find median}
		\State \Return $\finalestimate;$
		
	\end{algorithmic}
\end{algorithm}

\begin{algorithm}[t]
	\caption{\ApproxMCCore$(F,\thresh)$}
	\label{alg: appmc core}
	
	\begin{algorithmic}[1]
		\State Choose $h$ at random from $\hashspace(n,n);$
		\State Choose $\alpha$ at random from $\{0,1\}^n;$
		\State $m \leftarrow \FibBinSearch(F, h, \alpha, \thresh);$ \label{ln: estimate m} \label{ln: log search}
		\State $\Cnt{F}{m} \leftarrow \BoundedSAT\left(F\wedge \left(h^{(m)}\right)^{-1}\left(\alpha^{(m)}\right), \thresh\right);$ \label{ln: appmc core bounded sat}
		\State \Return $(2^m\times \Cnt{F}{m});$ \label{ln: core return}
	\end{algorithmic}
\end{algorithm}

As noted above, the core algorithm of {\ApproxMC} has not changed since 2016, and in this work, we aim to address the core limitation of {\ApproxMC}. To put our contribution in context, we first review {\ApproxMC} and its core algorithm, called {\ApproxMCCore}. 
We present the pseudocode of {\ApproxMC} and {\ApproxMCCore} in Algorithm~\ref{alg: appmc} and \ref{alg: appmc core}, respectively. 
 {\ApproxMCCore} may return an estimate that falls outside the {\PAC} range $\left[ \frac{|\satisfying{F}|}{1+\varepsilon}, (1 + \varepsilon)|\satisfying{F}| \right]$ with a certain probability of error. Therefore, {\ApproxMC} repeatedly invokes {\ApproxMCCore} (Lines~\ref{ln: repeat begin}--~\ref{ln: repeat end}) and returns the median of the estimates returned by {\ApproxMCCore} (Line~\ref{ln: find median}), which reduces the error probability to the user-provided parameter $\delta$.

 Let $\Obad_t$ denote the event that the median of $t$ estimates falls outside $\left[ \frac{|\satisfying{F}|}{1+\varepsilon}, (1 + \varepsilon)|\satisfying{F}| \right]$. Let $L$ denote the event that an invocation {\ApproxMCCore} returns an estimate less than $\frac{|\satisfying{F}|}{1+\varepsilon}$. Similarly, let $U$ denote the event that an individual estimate of $|\satisfying{F}|$ is greater than $(1 + \varepsilon)|\satisfying{F}|$. For simplicity of exposition, we assume $t$ is odd; the current implementation of $t$ indeed ensures that $t$ is odd by choosing the smallest odd $t$ for which $\Pr[\Obad_t] \leq \delta$.  

In the remainder of the section, we will demonstrate that reducing $\max\left\{\Prb{L}, \Prb{U}\right\}$ can effectively reduce the number of repetitions $t$, making the small-$\delta$ scenarios practical. To this end, we will first demonstrate the existing analysis technique of {\ApproxMC} leads to loose bounds on  $\Pr[\Obad_t]$. We then present a new analysis that leads to tighter bounds on $\Pr[\Obad_t]$.

The existing combinatorial analysis in~\cite{CMV13} derives the following proposition:
\begin{proposition}
	\label{prop:eq-bound}
\begin{align*}	
	\Prb{\Obad_t} \le \eta(t, \lceil t/2 \rceil, \Prb{L \cup U})
\end{align*} 
where $\eta(t, m, p) = \sum^t_{k=m} {t \choose k}p^k(1-p)^{t-k}$.
\end{proposition}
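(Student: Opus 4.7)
The plan is to reduce the event $\Obad_t$ to a Binomial tail event on the indicators of single-invocation failures, then invoke independence across the $t$ calls to $\ApproxMCCore$. First I would record that for a single invocation, the events $L$ and $U$ are disjoint: an estimate cannot be simultaneously strictly less than $|\satisfying{F}|/(1+\varepsilon)$ and strictly greater than $(1+\varepsilon)|\satisfying{F}|$. Consequently the probability that an estimate falls outside $\bigl[|\satisfying{F}|/(1+\varepsilon), (1+\varepsilon)|\satisfying{F}|\bigr]$ is exactly $p := \Prb{L \cup U} = \Prb{L}+\Prb{U}$.

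Next I would establish the structural observation linking the median to the individual estimates. Assuming $t$ is odd (as noted in the preceding paragraph), the median is the $\lceil t/2\rceil$-th order statistic. If the median lies below $|\satisfying{F}|/(1+\varepsilon)$, then at least $\lceil t/2\rceil$ of the estimates must lie below that threshold; symmetrically for the upper side. In either case, at least $\lceil t/2\rceil$ of the $t$ estimates lie outside the PAC range, so $\Obad_t$ is contained in the event $\{\#\text{bad estimates} \geq \lceil t/2\rceil\}$, where ``bad'' denotes the disjoint event $L \cup U$.

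Finally, because the $t$ invocations of $\ApproxMCCore$ each draw fresh hash functions and fresh $\alpha$, their failure indicators are i.i.d. Bernoulli$(p)$ random variables. The number of bad estimates is therefore Binomial$(t,p)$, so the event-containment from the previous paragraph yields $\Prb{\Obad_t} \le \sum_{k=\lceil t/2\rceil}^{t}\binom{t}{k}p^{k}(1-p)^{t-k} = \eta(t,\lceil t/2\rceil, \Prb{L\cup U})$, which is the stated bound. The only subtle step is the median observation: one must resist the temptation to apply a direct union bound on $\Prb{\text{median}<\text{lower}}$ and $\Prb{\text{median}>\text{upper}}$ with a per-side probability, since this would double-count and would need the weaker bound $\max\{\Prb{L},\Prb{U}\}$ rather than $\Prb{L\cup U}$ to come through cleanly; the one-shot containment into a single Binomial tail avoids this loss and is precisely what makes the looseness flagged in Section~\ref{sec: weakness} visible (the bound depends on $\Prb{L\cup U}$ rather than on $\max\{\Prb{L},\Prb{U}\}$).
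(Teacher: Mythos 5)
Your proof is correct and follows the same route the paper takes: disjointness of $L$ and $U$ so that a single call is ``bad'' with probability exactly $p=\Prb{L\cup U}$, the median-to-order-statistic observation that $\Obad_t$ is contained in the event that at least $\lceil t/2\rceil$ of the $t$ estimates are bad, and then the i.i.d.\ Bernoulli structure across independent invocations to turn the count into a $\mathrm{Binomial}(t,p)$ tail, which is precisely $\eta(t,\lceil t/2\rceil,\Prb{L\cup U})$.

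One remark on your closing commentary, which inverts the paper's point: the side-by-side ``union bound'' on $\Prb{\text{median}<\text{lower}}$ and $\Prb{\text{median}>\text{upper}}$ does \emph{not} lose anything here. Since $L$ and $U$ are disjoint on each call, more than $\lceil t/2\rceil$ estimates cannot simultaneously fall below and above, so the two median-side events are themselves disjoint, and the union ``bound'' is an equality: $\Prb{\Obad_t}=\eta(t,\lceil t/2\rceil,\Prb{L})+\eta(t,\lceil t/2\rceil,\Prb{U})$, which is exactly Lemma~\ref{thm: error bound}. That side-wise decomposition is \emph{strictly tighter} than the one-shot $\eta(t,\lceil t/2\rceil,\Prb{L\cup U})$ of this proposition (see Proposition~\ref{prop: untight}); the looseness of Proposition~\ref{prop:eq-bound} comes from lumping $L$ and $U$ together, because a repetition run in which some bad estimates undershoot and others overshoot can still have a correct median yet be counted by the lumped tail. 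So the ``temptation'' you warn against is in fact the improvement the paper is advocating.
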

Proposition~\ref{prop:eq-bound} follows from the observation that if the median falls outside the {\PAC} range, 
at least $\left\lceil t/2 \right\rceil$ of the results must also be outside the range.
Let $\eta(t, \lceil t/2 \rceil, \Prb{L \cup U}) \le \delta$, and we can compute a valid $t$ at Line~\ref{ln: init} of {\ApproxMC}.

Proposition~\ref{prop:eq-bound} raises a question: can we derive a tight upper bound for $\Prb{\Obad_t}$?
The following lemma provides an affirmative answer to this question. 
\begin{lemma}
	\label{thm: error bound}
	Assuming $t$ is odd, we have:
	\begin{align*}
		\Prb{\Obad_t} = \eta(t, \lceil t/2 \rceil, \Prb{L}) + \eta(t, \lceil t/2 \rceil, \Prb{U})
	\end{align*}
\end{lemma}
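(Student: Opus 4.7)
The plan is to decompose $\Obad_t$ into two disjoint sub-events based on the direction in which the median fails, and then argue that each sub-event is a binomial tail.

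\textbf{Step 1: decompose $\Obad_t$.} Write $\Obad_t = \Obad_t^{\sL} \cup \Obad_t^{\sH}$, where $\Obad_t^{\sL}$ is the event that the median is strictly less than $|\satisfying{F}|/(1+\varepsilon)$, and $\Obad_t^{\sH}$ is the event that the median is strictly greater than $(1+\varepsilon)|\satisfying{F}|$. These two events are trivially disjoint (a single real number cannot be both below the lower bound and above the upper bound), so $\Prb{\Obad_t} = \Prb{\Obad_t^{\sL}} + \Prb{\Obad_t^{\sH}}$.

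\textbf{Step 2: characterize each sub-event as a binomial tail.} Since $t$ is odd, the median of $t$ numbers is at most the lower threshold $|\satisfying{F}|/(1+\varepsilon)$ iff at least $\lceil t/2 \rceil$ of the $t$ estimates are at most that threshold, i.e., iff $L$ occurs in at least $\lceil t/2 \rceil$ of the invocations. Symmetrically, $\Obad_t^{\sH}$ occurs iff $U$ occurs in at least $\lceil t/2 \rceil$ invocations. Because the $t$ invocations of {\ApproxMCCore} are independent, the number of invocations in which $L$ holds is $\mathrm{Binomial}(t, \Prb{L})$, and similarly for $U$. Therefore
\begin{align*}
\Prb{\Obad_t^{\sL}} = \eta(t, \lceil t/2 \rceil, \Prb{L}), \qquad \Prb{\Obad_t^{\sH}} = \eta(t, \lceil t/2 \rceil, \Prb{U}).
\end{align*}
Adding these yields the claimed equality.

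\textbf{Step 3: sanity-check disjointness (consistency).} I should note explicitly that $L$ and $U$ are disjoint per invocation, and that with $t$ odd the two sub-events $\Obad_t^{\sL}$ and $\Obad_t^{\sH}$ cannot co-occur even at the level of invocation counts: if each required at least $\lceil t/2 \rceil = (t+1)/2$ invocations, the total would be at least $t+1 > t$, contradicting the fact that $L$-invocations and $U$-invocations are disjoint subsets of the $t$ total runs. This ensures that the decomposition in Step~1 and the summation in Step~2 yield an exact identity rather than only an upper bound.

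\textbf{Main obstacle.} The argument is essentially definitional, so there is no deep obstacle; the one subtlety to be careful about is the equivalence in Step~2 between "the median lies below the threshold" and "at least $\lceil t/2 \rceil$ of the samples lie below the threshold." This is where oddness of $t$ is used: for odd $t$, the median is the $\lceil t/2 \rceil$-th order statistic, so it is below a threshold iff the $\lceil t/2 \rceil$-th smallest sample is below, iff at least $\lceil t/2 \rceil$ samples are below. I would state this equivalence once carefully and then the rest follows by a direct binomial computation, giving equality (not merely an upper bound as in Proposition~\ref{prop:eq-bound}).
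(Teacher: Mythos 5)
Your proof is correct and takes essentially the same route as the paper: decompose $\Obad_t$ into the two disjoint direction-of-failure events, show each is equivalent (via the order-statistic argument and oddness of $t$) to a binomial tail of the corresponding $L$- or $U$-indicators, and observe that the two tail events cannot co-occur because that would require at least $t+1$ of the $t$ runs to be "bad." The only cosmetic difference is that you establish disjointness directly at the level of the median (before translating to counts) whereas the paper first translates to indicator sums and then shows the intersection has probability zero by the counting argument — both steps appear in both proofs, just in a different order.
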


\begin{proof}
	Let $I^L_i$ be an indicator variable that is 1 when {\ApproxMCCore} returns a $\solCount$ less than $ \frac{|\satisfying{F}|}{1+\varepsilon}$, indicating the occurrence of event $L$ in the $i$-th repetition.
	Let $I^U_i$ be an indicator variable that is 1 when {\ApproxMCCore} returns a $\solCount$ greater than $(1+\varepsilon)|\satisfying{F}|$, indicating the occurrence of event $U$ in the $i$-th repetition.
	We aim first to prove that $\Obad_t \Leftrightarrow \left(\sum_{i=1}^{t} I_i^L \ge \left\lceil \frac{t}{2} \right\rceil\right) \vee \left(\sum_{i=1}^{t} I_i^U \ge \left\lceil \frac{t}{2} \right\rceil\right)$. 
	We will begin by proving the right ($\Rightarrow$) implication. 
	If the median of $t$ estimates violates the {\PAC} guarantee, the median is either less than $\frac{|\satisfying{F}|}{1+\varepsilon}$ or greater than $(1+\varepsilon)|\satisfying{F}|$.
	In the first case, since half of the estimates are less than the median, at least $\left\lceil \frac{t}{2} \right\rceil$ estimates are less than $\frac{|\satisfying{F}|}{1+\varepsilon}$.
	Formally, this implies $\sum_{i=1}^{t} I_i^L \ge \left\lceil \frac{t}{2} \right\rceil$. 
	Similarly, in the case that the median is greater than $(1+\varepsilon)|\satisfying{F}|$, since half of the estimates are greater than the median, at least $\left\lceil \frac{t}{2} \right\rceil$ estimates are greater than $(1+\varepsilon)|\satisfying{F}|$,
	thus formally implying $\sum_{i=1}^{t} I_i^U \ge \left\lceil \frac{t}{2} \right\rceil$.
	On the other hand, we prove the left $(\Leftarrow)$ implication. Given $\sum_{i=1}^{t} I_i^L \ge \left\lceil \frac{t}{2} \right\rceil$, more than half of the estimates are less than $\frac{|\satisfying{F}|}{1+\varepsilon}$, 
	and therefore the median is less than $\frac{|\satisfying{F}|}{1+\varepsilon}$, violating the {\PAC} guarantee. 
	Similarly, given $\sum_{i=1}^{t} I_i^U \ge \left\lceil \frac{t}{2} \right\rceil$, more than half of the estimates are greater than $(1+\varepsilon)|\satisfying{F}|$,
	and therefore the median is greater than $(1+\varepsilon)|\satisfying{F}|$, violating the {\PAC} guarantee.
	This concludes the proof of $\Obad_t \Leftrightarrow \left(\sum_{i=1}^{t} I_i^L \ge \left\lceil \frac{t}{2} \right\rceil\right) \vee \left(\sum_{i=1}^{t} I_i^U \ge \left\lceil \frac{t}{2} \right\rceil\right)$.
	Then we obtain:
	\begin{align*}
		\Prb{\Obad_t} &= \Prb{\left(\sum_{i=1}^{t} I_i^L \ge \left\lceil t/2 \right\rceil\right) \vee \left(\sum_{i=1}^{t} I_i^U \ge \left\lceil t/2 \right\rceil\right)} \\
		&= \Prb{\left(\sum_{i=1}^{t} I_i^L \ge \left\lceil t/2 \right\rceil\right)} + \Prb{\left(\sum_{i=1}^{t} I_i^U \ge \left\lceil t/2 \right\rceil\right)} \\
		&- \Prb{\left(\sum_{i=1}^{t} I_i^L \ge \left\lceil t/2 \right\rceil\right) \wedge \left(\sum_{i=1}^{t} I_i^U \ge \left\lceil t/2 \right\rceil\right)}
	\end{align*}
	Given $I^L_i+I^U_i \le 1$ for $i=1, 2, ..., t$, $\sum_{i=1}^{t} (I^L_i + I^U_i) \le t$ is there, 
	but if $\left(\sum_{i=1}^{t} I_i^L \ge \left\lceil t/2 \right\rceil\right) \wedge \left(\sum_{i=1}^{t} I_i^U \ge \left\lceil t/2 \right\rceil\right)$ is also given, we obtain $\sum_{i=1}^{t} (I^L_i + I^U_i) \ge t+1$ 
	contradicting $\sum_{i=1}^{t} (I^L_i + I^U_i) \le t$; 
	Hence, we can conclude that $\Prb{\left(\sum_{i=1}^{t} I_i^L \ge \left\lceil t/2 \right\rceil\right) \wedge \left(\sum_{i=1}^{t} I_i^U \ge \left\lceil t/2 \right\rceil\right)}=0$. From this, we can deduce:
	\begin{align*}
		\Prb{\Obad_t} &= \Prb{\left(\sum_{i=1}^{t} I_i^L \ge \left\lceil t/2 \right\rceil\right)} + \Prb{\left(\sum_{i=1}^{t} I_i^U \ge \left\lceil t/2 \right\rceil\right)} \\
		&= \eta(t, \lceil t/2 \rceil, \Prb{L}) + \eta(t, \lceil t/2 \rceil, \Prb{U})
	\end{align*}
	\qed
\end{proof}

Though~\Cref{thm: error bound} shows that reducing $\Prb{L}$ and $\Prb{U}$ can decrease the error probability,
it is still uncertain to what extent $\Prb{L}$ and $\Prb{U}$ affect the error probability.
To further understand this impact, the following lemma is presented to establish a correlation between the error probability and $t$ depending on $\Prb{L}$ and $\Prb{U}$.

\begin{lemma}
	\label{thm: error prob max}
	Let $p_{max} = \max\left\{\Prb{L}, \Prb{U}\right\}$ and $p_{max} < 0.5$, we have 
	\begin{align*}
		\Prb{\Obad_t} \in \Theta\left(t^{-\frac{1}{2}} \left(2\sqrt{p_{max}(1-p_{max})}\right)^t\right)
	\end{align*}
\end{lemma}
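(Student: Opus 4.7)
The plan is to combine the exact identity from Lemma~\ref{thm: error bound} with the asymptotic estimate from Lemma~\ref{lm: eta exponential}, and then use the monotonicity of $p \mapsto 2\sqrt{p(1-p)}$ on $(0, 0.5)$ to argue that the larger of the two probabilities dictates the asymptotic rate.

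First I would invoke Lemma~\ref{thm: error bound} to write
\[
\Prb{\Obad_t} \;=\; \eta(t, \lceil t/2 \rceil, \Prb{L}) + \eta(t, \lceil t/2 \rceil, \Prb{U}).
\]
Since both $\Prb{L}$ and $\Prb{U}$ are bounded above by $p_{max} < 0.5$, Lemma~\ref{lm: eta exponential} applies to each summand, giving
\[
\eta(t, \lceil t/2 \rceil, \Prb{L}) \in \Theta\!\left(t^{-\frac{1}{2}} \bigl(2\sqrt{\Prb{L}(1-\Prb{L})}\bigr)^t\right),
\]
and analogously for $\Prb{U}$. Next I would observe that the function $p \mapsto 2\sqrt{p(1-p)}$ is strictly increasing on $(0, 0.5)$, so the summand corresponding to $p_{max}$ dominates asymptotically while the other summand is bounded above by it up to a constant factor.

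For the upper bound I would simply bound the sum by $2\,\eta(t, \lceil t/2\rceil, p_{max})$, which lies in $\bigO{t^{-\frac{1}{2}}\bigl(2\sqrt{p_{max}(1-p_{max})}\bigr)^t}$. For the lower bound I would drop the smaller summand and use $\Prb{\Obad_t} \ge \eta(t, \lceil t/2 \rceil, p_{max})$, which by Lemma~\ref{lm: eta exponential} is in $\Omega\!\left(t^{-\frac{1}{2}}\bigl(2\sqrt{p_{max}(1-p_{max})}\bigr)^t\right)$. Together these two bounds give the claimed $\Theta$ estimate.

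The main obstacle is the edge case where one of $\Prb{L}$ or $\Prb{U}$ is $0$ (or so small that Lemma~\ref{lm: eta exponential} is vacuous for that term); I would handle this by noting that in such a case the corresponding $\eta$ term is $0$ and only the $p_{max}$ term survives, matching the claimed asymptotic exactly. A minor secondary concern is making the monotonicity-based upper bound rigorous, which reduces to checking that $\sqrt{p(1-p)}$ is increasing on $(0, 0.5)$ by elementary calculus; this is routine and will not require additional machinery.
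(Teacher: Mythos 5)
Your proposal is correct and follows essentially the same route as the paper: apply Lemma~\ref{thm: error bound} to get the exact sum of two $\eta$ terms, invoke Lemma~\ref{lm: eta exponential} on each, and note that since $p \mapsto 2\sqrt{p(1-p)}$ is increasing on $(0,0.5)$, the summand at $p_{max}$ dominates. The paper states this more tersely (collapsing the two $\Theta$ expressions into one in a single step), while you spell out the monotonicity-based upper bound, the drop-one-term lower bound, and the $\Prb{L}=0$ or $\Prb{U}=0$ edge case, all of which are sound and in the spirit of the original.
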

\begin{proof}
	Applying Lemma~\ref{lm: eta exponential} and \ref{thm: error bound}, we have 
	\begin{align*}
		\Prb{\Obad_t} &\in \Theta\left(t^{-\frac{1}{2}} \left( \left(2\sqrt{\Prb{L}(1-\Prb{L})}\right)^t + \left(2\sqrt{\Prb{U}(1-\Prb{U})}\right)^t\right) \right) \\
		&= \Theta\left(t^{-\frac{1}{2}} \left(2\sqrt{p_{max}(1-p_{max})}\right)^t\right)
	\end{align*}
	\qed
\end{proof}

In summary, \Cref{thm: error prob max} provides a way to tighten the bound on $\Pr[\Obad_t]$ by designing an algorithm such that we can obtain a tighter bound on $p_{max}$ in contrast to previous approaches that relied on obtaining a tighter bound on $\Pr[L \cup U]$.

\section{Rounding Model Counting} \label{sec: theory}
In this section, we present a \emph{rounding}-based technique that allows us to obtain a tighter bound on $p_{max}$. On a high-level, instead of returning the estimate from one iteration of the underlying core algorithm as the number of solutions in a randomly chosen cell multiplied by the number of cells, we {\em round} each estimate of the model count to a value that is more likely to be within $(1+\varepsilon)$-bound. While counter-intuitive at first glance,
we show that rounding the estimate reduces $\max\left\{\Prb{L}, \Prb{U}\right\}$, thereby resulting in a smaller number of repetitions of the underlying algorithm. 

We present {\ApproxMCSix}, a \emph{rounding}-based approximate model counting algorithm, in Section~\ref{sec: algorithm}. 
Section~\ref{sec: analysis} will demonstrate how {\ApproxMCSix} decreases $\max\left\{\Prb{L}, \Prb{U}\right\}$ and the number of estimates. 
Lastly, in Section~\ref{sec: proof one case}, we will provide proof of the theoretical correctness of the algorithm.

\subsection{Algorithm} \label{sec: algorithm}
Algorithm~\ref{alg: roundmc} presents the procedure of {\ApproxMCSix}. 
{\ApproxMCSix} takes as input a formula $F$, a tolerance parameter $\varepsilon$, and a confidence parameter $\delta$.
{\ApproxMCSix} returns an $(\varepsilon, \delta)$-estimate $c$ of $|\satisfying{F}|$ such that $\Prob{\frac{|\satisfying{F}|}{1+\varepsilon} \le c \le (1 + \varepsilon)|\satisfying{F}|} \ge 1 - \delta$.
{\ApproxMCSix} is identical to {\ApproxMC} in its initialization of data structures and handling of base cases (Lines~\ref{ln:roundmc-begin}--~\ref{ln: roundmc init iter}).

In Line~\ref{ln: roundmc config round}, we pre-compute the rounding type and rounding value to be used in {\ApproxMCSixCore}.
{\configround} is implemented in Algorithm~\ref{alg: config round}; the precise choices arise due to technical analysis, as presented in  Section~\ref{sec: analysis}.
Note that, in {\configround}, $\Cnt{F}{m}$ is \emph{rounded up} to {\roundvalue} for $\varepsilon<3$ ($\roundtype=1$) but \emph{rounded} to {\roundvalue} for $\varepsilon\ge3$ ($\roundtype=0$). 
Rounding up means $\Cnt{F}{m} = \roundvalue$ only if $\Cnt{F}{m} < \roundvalue$.
Rounding means $\Cnt{F}{m} = \roundvalue$ in all cases.
{\ApproxMCSix} computes the number of repetitions necessary to lower error probability down to $\delta$ at Line~\ref{ln: roundmc compute iter}. 
The implementation of {\computeIter} is presented in Algorithm~\ref{alg: compute iter} following \Cref{thm: error bound}.
The iterator keeps increasing until the tight error bound is no more than $\delta$.
As we will show in Section~\ref{sec: analysis}, $\Prb{L}$ and $\Prb{U}$ depend on $\varepsilon$.
In the loop of Lines~\ref{ln: roundmc repeat begin}--~\ref{ln: roundmc repeat end}, {\ApproxMCSixCore} repeatedly estimates $|\satisfying{F}|$.
Each estimate $\solCount$ is stored in List $C$, and the median of $C$ serves as the final estimate satisfying the $(\varepsilon,\delta)$-guarantee.

Algorithm~\ref{alg: roundmc core} shows the pseudo-code of {\ApproxMCSixCore}.
A random hash function is chosen at Line~\ref{ln: roundmc hash} to partition $\satisfying{F}$ into \emph{roughly equal} cells.
A random hash value is chosen at Line~\ref{ln: roundmc hash value} to randomly pick a cell for estimation.
In Line~\ref{ln: roundmc search m}, we search for a value $m$ such that the cell picked from $2^m$ available cells is \emph{small} enough to enumerate solutions one by one while providing a good estimate of $|\satisfying{F}|$.
In Line~\ref{ln: roundmc count cell}, a bounded model counting is invoked to compute the size of the picked cell, i.e., $\Cnt{F}{m}$.
Finally, if {\roundtype} equals $1$, $\Cnt{F}{m}$ is rounded up to {\roundvalue} at Line \ref{ln: roundmc round up}. 
Otherwise, {\roundtype} equals $0$, and $\Cnt{F}{m}$ is rounded to {\roundvalue} at Line \ref{ln: roundmc round}.
Note that \emph{rounding up} returns {\roundvalue} only if $\Cnt{F}{m}$ is less than {\roundvalue}. 
However, in the case of \emph{rounding}, {\roundvalue} is always returned no matter what value {\Cnt{F}{m}} is.

\begin{algorithm}[t]
	\caption{\ApproxMCSix$(F, \varepsilon, \delta)$}
	\label{alg: roundmc}
	
	\begin{algorithmic}[1]
		\State $\thresh \leftarrow 9.84\label{ln:roundmc-begin} \left(1+\frac{\varepsilon}{1+\varepsilon}\right)\left(1+\frac{1}{\varepsilon}\right)^2;$ \label{ln: roundmc thresh}
		\State $Y \leftarrow \BoundedSAT(F, \thresh);$ \label{ln: roundmc precheck thresh}
		\If{$(|Y| < \thresh)$} \Return $|Y|;$ \EndIf

		\State $C\leftarrow\emptyList; \iter \leftarrow 0;$ \label{ln: roundmc init iter}
		\State (\roundtype, \roundvalue) $\leftarrow \configround(\varepsilon)$ \label{ln: roundmc config round}
		\State $t \leftarrow \computeIter(\varepsilon, \delta)$ \label{ln: roundmc compute iter}
		\Repeat \label{ln: roundmc repeat begin}
		\State $\iter \leftarrow \iter + 1;$
		\State $\solCount \leftarrow \ApproxMCSixCore(F, \thresh, \roundtype, \roundvalue);$ \label{ln: roundmc core}
		\State $\AddToList(C,\solCount);$
		\Until{$(\iter \ge t)$}$;$ \label{ln: roundmc repeat end}
		\State finalEstimate $\leftarrow \FindMedian(C);$
		\State \Return finalEstimate $;$
		
	\end{algorithmic}
\end{algorithm}
\begin{algorithm}[t]
	\caption{\ApproxMCSixCore$(F,\thresh,\roundtype,\roundvalue)$}
	\label{alg: roundmc core}
	
	\begin{algorithmic}[1]
		\State Choose $h$ at random from $\hashspace(n,n);$ \label{ln: roundmc hash}
		\State Choose $\alpha$ at random from $\{0,1\}^n;$ \label{ln: roundmc hash value}
		\State $m \leftarrow \FibBinSearch(F, h, \alpha, \thresh);$ \label{ln: roundmc search m}
		\State $\Cnt{F}{m} \leftarrow \BoundedSAT\left(F\wedge \left(h^{(m)}\right)^{-1}\left(\alpha^{(m)}\right), \thresh\right);$ \label{ln: roundmc count cell}
		\If{{\roundtype} = 1}
		\State \Return $(2^m\times \max\{\Cnt{F}{m},\roundvalue\});$ \label{ln: roundmc round up}
		\Else
		\State \Return $(2^m\times \roundvalue);$ \label{ln: roundmc round}
		\EndIf
	\end{algorithmic}
\end{algorithm}

\begin{algorithm}[t]
	\caption{$\configround(\varepsilon)$}
	\label{alg: config round}
	
	\begin{algorithmic}[1]
		\If{$(\varepsilon<\sqrt{2}-1)$} \Return $(1, \frac{\sqrt{1+2\varepsilon}}{2}\pivot);$
		\ElsIf{$(\varepsilon<1)$} \Return $(1, \frac{\pivot}{\sqrt{2}});$
		\ElsIf{$(\varepsilon<3)$} \Return $(1, \pivot);$
		\ElsIf{$(\varepsilon<4\sqrt{2}-1)$} \Return $(0, \pivot);$
		\Else \State \Return $(0, \sqrt{2}\pivot);$
		\EndIf
	\end{algorithmic}
\end{algorithm}

\begin{algorithm}[t]
	\caption{$\computeIter(\varepsilon,\delta)$}
	\label{alg: compute iter}
	
	\begin{algorithmic}[1]
		\State $\iter \leftarrow 1;$ \label{ln: compute iter init}
		\While{$(\eta(\iter, \lceil \iter/2 \rceil, \Pr_\varepsilon[L]) + \eta(\iter, \lceil \iter/2 \rceil, \Pr_\varepsilon[U]) > \delta)$} \label{ln: compute iter loop start}
		\State $\iter \leftarrow \iter + 2;$ \label{ln: compute iter loop end}
		\EndWhile	
		\State \Return \iter$;$
	\end{algorithmic}
\end{algorithm}

\subsection{Repetition Reduction}
\label{sec: analysis}
We will now show that {\ApproxMCSixCore} allows us to obtain a smaller $\max\left\{\Prb{L}, \Prb{U}\right\}$.
Furthermore, we show the large gap between the error probability of {\ApproxMCSix} and that of {\ApproxMC} both analytically and visually.

The following lemma presents the upper bounds of $\Prb{L}$ and $\Prb{U}$ for {\ApproxMCSixCore}. 
Let $\pivot=9.84\left(1+\frac{1}{\varepsilon}\right)^2$ for simplicity.
\begin{restatable}{lemma}{LUbound}
		\label{thm: complete L U bound}
		The following bounds hold for {\ApproxMCSix}:
		\begin{align*}
			\Prb{L} \le
			\begin{cases}
				0.262 & \text{if } \varepsilon<\sqrt{2}-1 \\
				0.157 & \text{if } \sqrt{2}-1\le\varepsilon<1 \\
				0.085 & \text{if } 1\le\varepsilon<3 \\
				0.055 & \text{if } 3\le\varepsilon<4\sqrt{2}-1 \\
				0.023 & \text{if } \varepsilon\ge4\sqrt{2}-1 \\
			\end{cases}
		\end{align*}
		\begin{align*}
			\Prb{U} \le
			\begin{cases}
				0.169 & \text{if } \varepsilon<3 \\
				0.044 & \text{if } \varepsilon\ge3 \\
			\end{cases}
		\end{align*}
\end{restatable}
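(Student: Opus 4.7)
The plan is to analyze events $L$ and $U$ for a single invocation of {\ApproxMCSixCore} by case-splitting on the five $\varepsilon$-regimes fixed by {\configround}. The workhorses are (i) the mean and variance bounds $\Exp{\Cnt{F}{m}} = |\satisfying{F}|/2^m$ and $\sigma^2[\Cnt{F}{m}] \le \Exp{\Cnt{F}{m}}$ from \Cref{prop: 2 universal}, (ii) monotonicity $\Cell{F}{m+1} \subseteq \Cell{F}{m}$, and (iii) Chebyshev's inequality applied at a carefully chosen reference index $m^*$.

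First, I would fix $m^*$ to be the integer for which $|\satisfying{F}|/2^{m^*}$ is closest to {\pivot} (treating the corner case $|\satisfying{F}|<\thresh$ handled at Line~\ref{ln: roundmc precheck thresh} separately). The standard analysis of {\FibBinSearch} shows that the $m$ it returns satisfies $\Cnt{F}{m} \le \thresh$ and $\Cnt{F}{m-1} > \thresh$, and in particular the output lies in a small window around $m^*$; I would partition the error events by the value of $m$ realized and use monotonicity to absorb several of them into a single Chebyshev call at $m^*$ (or $m^*\pm 1$).

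Next, I would translate events $L$ and $U$ through the rounding rule. In the rounded-up regime ($\roundtype=1$), the output is $2^m\max\{\Cnt{F}{m},\roundvalue\}$, so the event $L$ requires both $2^m\Cnt{F}{m} < |\satisfying{F}|/(1+\varepsilon)$ and $2^m\roundvalue < |\satisfying{F}|/(1+\varepsilon)$; the second conjunct fails for all $m$ at or below $m^*$ (for the chosen {\roundvalue}), leaving only $m$ strictly above $m^*$ to consider and yielding a tighter Chebyshev bound than the $\Pr[L\cup U]$-style analysis of {\ApproxMC}. For event $U$, the output exceeds $(1+\varepsilon)|\satisfying{F}|$ only when the cell is anomalously large for the $m$ actually returned, and a Chebyshev bound at the appropriate reference level controls this. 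In the plain-rounded regime ($\roundtype=0$), the output is $2^m\roundvalue$, so both $L$ and $U$ reduce to events purely on the index $m$, which can again be bounded by Chebyshev at $m^*$ using $\roundvalue = \pivot$ or $\sqrt{2}\pivot$.

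For each of the five branches of \Cref{alg: config round} I would then plug in the specific {\roundvalue}, compute the Chebyshev tail using $\sigma^2/(\text{deviation})^2$ with $\pivot = 9.84(1+1/\varepsilon)^2$, and verify the numerical constants $0.262,0.157,0.085,0.055,0.023$ (for $L$) and $0.169,0.044$ (for $U$). The main obstacle is choosing the reference index and the deviation so that the Chebyshev ratio is independent of $\varepsilon$ within each regime: the cutoffs $\sqrt{2}-1$, $1$, $3$, and $4\sqrt{2}-1$ are precisely the values at which the available slack in either the lower bound $|\satisfying{F}|/(1+\varepsilon)$ or the upper bound $(1+\varepsilon)|\satisfying{F}|$ changes which side of {\pivot} the rounding should land on; verifying that the constants stay valid across each interval (in particular at the boundary $\varepsilon = \sqrt{2}-1$, where $\roundvalue$ depends on $\varepsilon$) is where most of the technical work lives.
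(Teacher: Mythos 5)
Your high-level skeleton matches the paper's: the same reference index $m^\ast$ (the paper takes $m^\ast = \left\lfloor \log_2|\satisfying{F}| - \log_2\pivot + 1 \right\rfloor$, which is what "closest to $\pivot$" becomes once made precise), the same decomposition $\Prb{L}=\Prb{\bigcup_i \left(\overline{T_{i-1}}\cap T_i\cap L_i\right)}$ inherited from {\FibBinSearch}, Cantelli/Chebyshev tails at a few indices near $m^\ast$, and the observation that rounding deletes some of the $L_i$ events. However, the central step of your $L$-argument runs in the wrong direction. You assert that the conjunct $2^m\roundvalue < |\satisfying{F}|/(1+\varepsilon)$ "fails for all $m$ at or below $m^*$, leaving only $m$ strictly above $m^*$ to consider." Since the rounded return value $2^m\roundvalue$ is \emph{increasing} in $m$, and $2^{m^\ast}\roundvalue$ is roughly $(\roundvalue/\pivot)\cdot|\satisfying{F}|$, the conjunct actually fails for $m$ at or \emph{above} $m^\ast$; for $m < m^\ast$ the quantity $2^m\roundvalue$ can easily drop below $|\satisfying{F}|/(1+\varepsilon)$, so those events are not eliminated by rounding. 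The paper's observation $O3$ states this correctly ($L_i=\emptyset$ for $i\ge m^\ast$), and the residual events that must be bounded by Chebyshev are $T_{m^\ast-3}$, $L_{m^\ast-2}$, $L_{m^\ast-1}$ — all \emph{below} $m^\ast$. If you carried out your plan as written, you would discover that the events you thought rounding handled are precisely the ones still standing, and vice versa, and no Chebyshev bound at $m^\ast$ or $m^\ast\pm 1$ would close the argument.

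Beyond the sign flip, two further pieces you would need and have not flagged: (i) for $\varepsilon<\sqrt 2-1$ the paper must case-split on whether $\Exp{\Cnt{F}{m^\ast}}$ is smaller or larger than $\tfrac{1+\varepsilon}{2}\thresh$, because the single chain of inclusions is not uniformly valid on that range, and the final bound $0.262$ is the max of the two subcases; (ii) for the $U$-bound with $\roundtype=1$ it is not automatic that rounding leaves $U_{m^\ast}$ unchanged — the paper explicitly verifies $2^{m^\ast}\roundvalue\le(1+\varepsilon)|\satisfying{F}|$ in each $\varepsilon$-regime so that rounding up never creates a spurious over-estimate. Your outline treats both of these as incidental, but they are where the boundary constants $\sqrt 2-1$, $1$, $3$, $4\sqrt 2-1$ and the specific $\roundvalue$ choices actually earn their keep.
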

The proof of Lemma~\ref{thm: complete L U bound} is deferred to Section~\ref{sec: proof one case}. 
Observe that Lemma~\ref{thm: complete L U bound} influences the choices in the design of {\configround} (Algorithm~\ref{alg: config round}).
Recall that $\max\left\{\Prb{L}, \Prb{U}\right\} \le 0.36$ for {\ApproxMC} (\Cref{sec: appendix appmc}), but
Lemma~\ref{thm: complete L U bound} ensures $\max\left\{\Prb{L}, \Prb{U}\right\} \le 0.262$ for {\ApproxMCSix}.
For $\varepsilon \ge 4\sqrt{2}-1$, Lemma~\ref{thm: complete L U bound} even delivers $\max\left\{\Prb{L}, \Prb{U}\right\} \le 0.044$.

The following theorem analytically presents the gap between the error probability of {\ApproxMCSix} and that of {\ApproxMC}\footnote{
We state the result for the case $\sqrt{2}-1\le\varepsilon<1$. 
A similar analysis can be applied to other cases, which leads to an even bigger gap between {\ApproxMCSix} and {\ApproxMC}.
}. 
\begin{theorem}
	\label{thm: exp gap}
	For $\sqrt{2}-1\le\varepsilon<1$,
	\begin{align*}
		\Prb{\Obad_t} \in
		\begin{cases}
			\bigO{t^{-\frac{1}{2}}0.75^t} & \text{for {\ApproxMCSix}} \\
			\bigO{t^{-\frac{1}{2}}0.96^t} & \text{for {\ApproxMC}}\\
		\end{cases}
	\end{align*}
\end{theorem}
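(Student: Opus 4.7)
The plan is to reduce Theorem~\ref{thm: exp gap} to a direct application of Lemma~\ref{thm: error prob max} with the appropriate bounds on $p_{max} = \max\{\Pr[L], \Pr[U]\}$ for each of the two algorithms, and then to verify the two numerical constants $0.75$ and $0.96$ by computing $2\sqrt{p_{max}(1-p_{max})}$ in each case. Since Lemma~\ref{thm: error prob max} already states $\Pr[\Obad_t] \in \Theta\!\left(t^{-1/2}\,(2\sqrt{p_{max}(1-p_{max})})^t\right)$, and $2\sqrt{p(1-p)}$ is monotonically increasing for $p \in (0, 1/2)$, an upper bound on $p_{max}$ immediately yields the desired $\bigO{\cdot}$ bound.

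\textbf{Step 1 ({\ApproxMCSix}).} I would first invoke Lemma~\ref{thm: complete L U bound} in the range $\sqrt{2}-1 \le \varepsilon < 1$, which gives $\Pr[L] \le 0.157$ and $\Pr[U] \le 0.169$, so $p_{max} \le 0.169 < 1/2$. Plugging this into Lemma~\ref{thm: error prob max} and using monotonicity of $p \mapsto 2\sqrt{p(1-p)}$ on $(0, 1/2)$, I would compute $2\sqrt{0.169 \cdot 0.831} \le 0.75$. This yields $\Pr[\Obad_t] \in \bigO{t^{-1/2}\,0.75^t}$.

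\textbf{Step 2 ({\ApproxMC}).} I would next appeal to the bound $\max\{\Pr[L], \Pr[U]\} \le 0.36$ established for the original algorithm in~\Cref{sec: appendix appmc}. Lemma~\ref{thm: error prob max} applies equally to {\ApproxMC} (its proof only uses Lemma~\ref{thm: error bound} and Lemma~\ref{lm: eta exponential}, both independent of which core procedure is run). Hence with $p_{max} \le 0.36$, the constant becomes $2\sqrt{0.36 \cdot 0.64} = 2 \cdot 0.48 = 0.96$, giving $\Pr[\Obad_t] \in \bigO{t^{-1/2}\,0.96^t}$.

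\textbf{Main obstacle.} There is no real analytic obstacle here since the heavy lifting is done by Lemma~\ref{thm: complete L U bound} and Lemma~\ref{thm: error prob max}; the proof is a two-line chain of substitutions followed by arithmetic. The only thing to be careful about is the monotonicity argument: $p \mapsto 2\sqrt{p(1-p)}$ is increasing on $[0, 1/2]$ and the bounds $0.169$ and $0.36$ both lie comfortably in this interval, so substituting the upper bound on $p_{max}$ into the exponential base is legitimate and produces the claimed constants exactly. I would conclude by noting (as the footnote already does) that the same recipe applied to the other four subranges of $\varepsilon$ in Lemma~\ref{thm: complete L U bound} produces even smaller bases than $0.75$, which explains why the stated case is representative yet conservative.
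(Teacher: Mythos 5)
Your proposal is correct and follows essentially the same route as the paper: invoke Lemma~\ref{thm: complete L U bound} (resp.\ the appendix bound $p_{max}\le 0.36$) to bound $p_{max}$, plug into Lemma~\ref{thm: error prob max}, and compute $2\sqrt{p_{max}(1-p_{max})}$. Your explicit monotonicity observation for $p\mapsto 2\sqrt{p(1-p)}$ on $[0,1/2]$ is a small but welcome clarification of the ``$\subseteq$'' step that the paper leaves implicit.
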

\begin{proof}
	From \Cref{thm: complete L U bound}, we obtain $p_{max} \le 0.169$ for {\ApproxMCSix}.
	Applying \Cref{thm: error prob max}, we have 
	\begin{align*}
		\Prb{\Obad_t} \in \bigO{t^{-\frac{1}{2}} \left(2\sqrt{0.169(1-0.169)}\right)^t} \subseteq \bigO{t^{-\frac{1}{2}}0.75^t}
	\end{align*}
	For {\ApproxMC}, combining $p_{max} \le 0.36$ (\Cref{sec: appendix appmc}) and \Cref{thm: error prob max}, 
	we obtain 
	\begin{align*}
		\Prb{\Obad_t} \in \bigO{t^{-\frac{1}{2}} \left(2\sqrt{0.36(1-0.36)}\right)^t} = \bigO{t^{-\frac{1}{2}}0.96^t}
	\end{align*}
	\qed
\end{proof}
Figure~\ref{fig: complete error bound} visualizes the large gap between the error probability of {\ApproxMCSix} and that of {\ApproxMC}.
The x-axis represents the number of repetitions ($t$) in {\ApproxMCSix} or {\ApproxMC}.
The y-axis represents the upper bound of error probability in the log scale. 
For example, as $t=117$, {\ApproxMC} guarantees that with a probability of $10^{-3}$, the median over 117 estimates violates the {\PAC} guarantee.
However, {\ApproxMCSix} allows a much smaller error probability that is at most $10^{-15}$ for $\sqrt{2}-1\le\varepsilon<1$.
The smaller error probability enables {\ApproxMCSix} to repeat fewer repetitions while providing the same level of theoretical guarantee.
For example, given $\delta=0.001$ to {\ApproxMC}, i.e., $y=0.001$ in \Cref{fig: complete error bound}, {\ApproxMC} requests 117 repetitions to obtain the given error probability. 
However, {\ApproxMCSix} claims that 37 repetitions for $\varepsilon<\sqrt{2}-1$, 19 repetitions for $\sqrt{2}-1\le\varepsilon<1$, 17 repetitions for $1\le\varepsilon<3$,
7 repetitions for $3\le\varepsilon<4\sqrt{2}-1$, and 5 repetitions for $\varepsilon\ge4\sqrt{2}-1$ are sufficient to obtain the same level of error probability.
Consequently, {\ApproxMCSix} can obtain $3\times$, $6\times$, $7\times$, $17\times$, and $23\times$ speedups, respectively, than {\ApproxMC}.

\begin{figure}[h]
	\centering
	\includegraphics[scale=0.55]{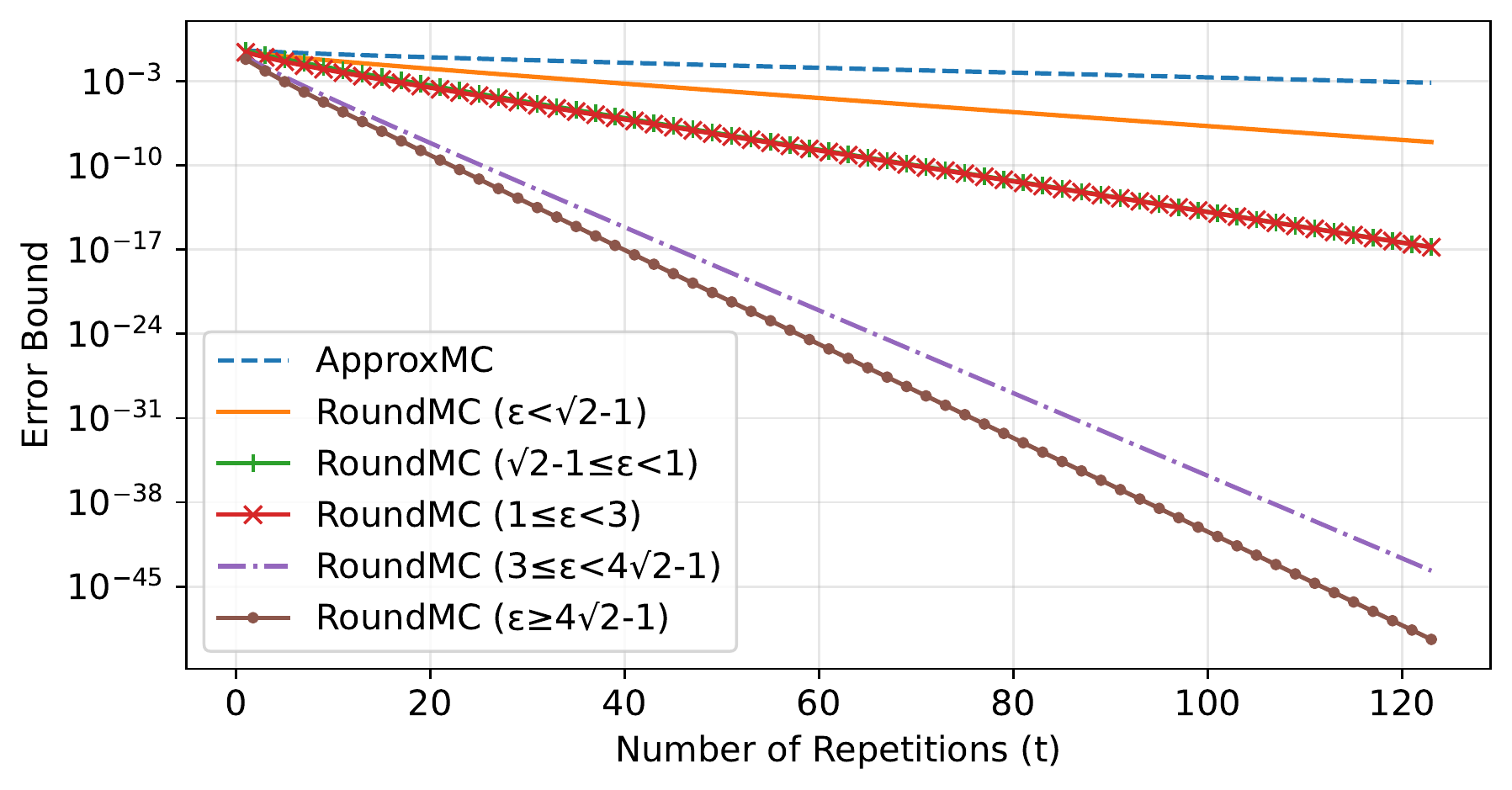}
	\caption{Comparison of error bounds for {\ApproxMCSix} and {\ApproxMC}.}
	\label{fig: complete error bound}
\end{figure}

\subsection{Proof of Lemma~\ref{thm: complete L U bound} for case  $\sqrt{2}-1\le\varepsilon<1$}
\label{sec: proof one case}

We provide full proof of Lemma~\ref{thm: complete L U bound} for case  $\sqrt{2}-1\le\varepsilon<1$. We defer the proof of other cases to \Cref{sec: appendix roundmc}.

Let $T_m$ denote the event $\left(\Cnt{F}{m} < \thresh \right)$, and let $L_m$ and $U_m$ denote the events $\left( \Cnt{F}{m} < \frac{\Exp{\Cnt{F}{m}}}{1+\varepsilon} \right)$
and $\left( \Cnt{F}{m} > \Exp{\Cnt{F}{m}}(1+\varepsilon) \right)$, respectively.
To ease the proof, let $U'_m$ denote $\left( \Cnt{F}{m} > \Exp{\Cnt{F}{m}}(1+\frac{\varepsilon}{1+\varepsilon}) \right)$, and thereby $U_m \subseteq U'_m$.
Let $m^\ast = \left\lfloor \log_2|\satisfying{F}| - \log_2\left(\pivot\right) + 1 \right\rfloor$ such that $m^\ast$ is the smallest $m$ satisfying $\frac{|\satisfying{F}|}{2^m}(1+\frac{\varepsilon}{1+\varepsilon}) \le \thresh-1$.

Let us first prove the lemmas used in the proof of Lemma~\ref{thm: complete L U bound}.

\begin{lemma}
	\label{lm: inequality}
	For every $0<\beta<1$, $\gamma>1$, and $1 \le m \le n$, the following holds:
	\begin{enumerate}
		\item $\Prb{\Cnt{F}{m} \le \beta\Exp{\Cnt{F}{m}}} \le \frac{1}{1+(1-\beta)^2\Exp{\Cnt{F}{m}}}$
		\item $\Prb{\Cnt{F}{m} \ge \gamma\Exp{\Cnt{F}{m}}} \le \frac{1}{1+(\gamma-1)^2\Exp{\Cnt{F}{m}}}$
	\end{enumerate}
\end{lemma}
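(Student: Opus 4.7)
The plan is to prove both inequalities by invoking the one-sided Chebyshev inequality (Cantelli's inequality) applied to the random variable $\Cnt{F}{m}$, and then substituting the variance bound from Proposition~\ref{prop: 2 universal}. The form of the right-hand side, namely $\tfrac{1}{1+(1-\beta)^2 \Exp{\Cnt{F}{m}}}$ rather than the looser $\tfrac{1}{(1-\beta)^2\Exp{\Cnt{F}{m}}}$ one would get from ordinary Chebyshev, is precisely the signature of Cantelli's bound, which for a random variable $X$ with finite variance states
\begin{align*}
\Prb{X - \Exp{X} \le -a} \le \frac{\sigma^2[X]}{\sigma^2[X]+a^2}, \quad \Prb{X - \Exp{X} \ge a} \le \frac{\sigma^2[X]}{\sigma^2[X]+a^2}
\end{align*}
for every $a>0$.

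For part (1), I would set $X = \Cnt{F}{m}$ and $a = (1-\beta)\Exp{\Cnt{F}{m}}$, which is positive since $0<\beta<1$. The event $\{\Cnt{F}{m} \le \beta \Exp{\Cnt{F}{m}}\}$ coincides with $\{X - \Exp{X} \le -a\}$, so Cantelli gives
\begin{align*}
\Prb{\Cnt{F}{m} \le \beta\Exp{\Cnt{F}{m}}} \le \frac{\sigma^2[\Cnt{F}{m}]}{\sigma^2[\Cnt{F}{m}] + (1-\beta)^2 \Exp{\Cnt{F}{m}}^2}.
\end{align*}
Now I would apply the variance bound $\sigma^2[\Cnt{F}{m}] \le \Exp{\Cnt{F}{m}}$ from Equation~\eqref{eq: var exp}. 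Since the right-hand side above is increasing in $\sigma^2[\Cnt{F}{m}]$, this substitution preserves the inequality, and dividing numerator and denominator by $\Exp{\Cnt{F}{m}}$ yields the claimed bound. Part (2) follows from the symmetric statement of Cantelli applied with $a = (\gamma-1)\Exp{\Cnt{F}{m}} > 0$ and the same variance substitution.

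The only nontrivial step is checking monotonicity of $\tfrac{v}{v+c}$ in $v$ for fixed $c > 0$ so that replacing the true variance by its upper bound can only increase the ratio; this is a one-line calculus observation. There is no real obstacle here — the lemma is essentially a standard consequence of Cantelli applied to 2-universal hash counts — but care is needed not to accidentally invoke two-sided Chebyshev, which would produce a strictly weaker bound of the form $\tfrac{1}{(1-\beta)^2\Exp{\Cnt{F}{m}}}$ and would not suffice for the later arguments leading to Lemma~\ref{thm: complete L U bound}.
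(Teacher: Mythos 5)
Your proof is correct and follows essentially the same route as the paper: the paper proves part (2) exactly by Cantelli plus the variance bound from Equation~\eqref{eq: var exp}, and cites the analogous one-sided Chebyshev argument in~\cite{CMV16} for part (1), which is precisely what you carry out explicitly. Your remark on the monotonicity of $v \mapsto v/(v+c)$ correctly justifies replacing the variance by its upper bound, so the argument is complete.
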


\begin{proof}
	Statement 1 can be proved following the proof of Lemma 1 in~\cite{CMV16}.$\;$ For statement 2, we rewrite the left-hand side and apply Cantelli's inequality:
	$\Prb{\Cnt{F}{m} - \Exp{\Cnt{F}{m}} \ge (\gamma-1)\Exp{\Cnt{F}{m}}} \le \frac{\sigma^2\left[ \Cnt{F}{m} \right]}{\sigma^2\left[ \Cnt{F}{m} \right]+ ((\gamma-1)\Exp{\Cnt{F}{m}})^2}$.
	Finally, applying Equation~\ref{eq: var exp} completes the proof.
	\qed
\end{proof}

\begin{lemma}
	\label{lm: bound}
	Given $\sqrt{2}-1\le\varepsilon<1$, the following bounds hold: 
	\begin{enumerate}
		\item $\Prb{T_{m^\ast-3}} \le \frac{1}{62.5}$
		\item $\Prb{L_{m^\ast-2}} \le \frac{1}{20.68}$
		\item $\Prb{L_{m^\ast-1}} \le \frac{1}{10.84}$
		\item $\Prb{U'_{m^\ast}} \le \frac{1}{5.92}$
	\end{enumerate}
\end{lemma}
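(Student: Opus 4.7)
}
The plan is to reduce each of the four statements to a direct application of Lemma~\ref{lm: inequality} (Cantelli's inequality), after pinning down the correct range of $\Exp{\Cnt{F}{m^\ast - k}}$ from the definition of $m^\ast$. First, I would unpack $m^\ast = \lfloor \log_2|\satisfying{F}| - \log_2(\pivot) + 1 \rfloor$ to observe that $\pivot/2 \le \Exp{\Cnt{F}{m^\ast}} < \pivot$, and hence $\Exp{\Cnt{F}{m^\ast - k}} = 2^{k}\Exp{\Cnt{F}{m^\ast}} \ge 2^{k-1}\pivot$ for $k = 1,2,3$. The identity $\pivot = 9.84(1+1/\varepsilon)^2 = 9.84(1+\varepsilon)^2/\varepsilon^{2}$ will then be the single algebraic fact that drives every constant in the statement.

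For parts (2) and (3) I would take $\beta = 1/(1+\varepsilon)$, so $(1-\beta)^2 = \varepsilon^2/(1+\varepsilon)^2$. Applying Lemma~\ref{lm: inequality}(1), the factor $\varepsilon^2/(1+\varepsilon)^2$ cancels the $(1+\varepsilon)^2/\varepsilon^2$ inside $\pivot$, leaving
\begin{align*}
(1-\beta)^2\,\Exp{\Cnt{F}{m^\ast-1}} \;\ge\; 9.84,\qquad (1-\beta)^2\,\Exp{\Cnt{F}{m^\ast-2}} \;\ge\; 19.68,
\end{align*}
which yields $\Prb{L_{m^\ast-1}} \le 1/10.84$ and $\Prb{L_{m^\ast-2}} \le 1/20.68$. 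For (4) I would take $\gamma = 1 + \varepsilon/(1+\varepsilon)$ and apply Lemma~\ref{lm: inequality}(2); again $(\gamma-1)^2 = \varepsilon^2/(1+\varepsilon)^2$ cancels against $\Exp{\Cnt{F}{m^\ast}} \ge \pivot/2 = 4.92(1+\varepsilon)^2/\varepsilon^2$, giving $(\gamma-1)^2\,\Exp{\Cnt{F}{m^\ast}} \ge 4.92$ and hence $\Prb{U'_{m^\ast}} \le 1/5.92$. These three statements never use the hypothesis $\varepsilon \ge \sqrt{2}-1$ and in fact hold for every $\varepsilon > 0$.

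Part (1) is the one non-routine step and where the case split on $\varepsilon$ really enters. Here I would write $\Prb{T_{m^\ast - 3}} \le \Prb{\Cnt{F}{m^\ast-3} \le \beta\,\Exp{\Cnt{F}{m^\ast-3}}}$ with $\beta := \thresh/\Exp{\Cnt{F}{m^\ast-3}}$, then rewrite $(1-\beta)^2\,\Exp{\Cnt{F}{m^\ast-3}} = (\Exp{\Cnt{F}{m^\ast-3}} - \thresh)^2/\Exp{\Cnt{F}{m^\ast-3}}$. As a function of $x := \Exp{\Cnt{F}{m^\ast-3}}$ this is increasing for $x > \thresh$, so the worst case is $x = 4\pivot$, giving
\begin{align*}
(1-\beta)^2\,\Exp{\Cnt{F}{m^\ast-3}} \;\ge\; \bigl(3 - \tfrac{\varepsilon}{1+\varepsilon}\bigr)^{2}\tfrac{\pivot}{4}.
\end{align*}
The hypothesis $\varepsilon < 1$ forces $\varepsilon/(1+\varepsilon) < 1/2$, so $3 - \varepsilon/(1+\varepsilon) > 5/2$; combined with $\pivot \ge 9.84 \cdot 4 = 39.36$ for $\varepsilon < 1$, this yields a lower bound of $(25/16)\cdot 39.36 = 61.5$, producing $\Prb{T_{m^\ast-3}} \le 1/62.5$ via Lemma~\ref{lm: inequality}(1).

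The one subtlety I expect to have to be careful about is the double dependence on $\varepsilon$ in part (1): both $\thresh/\pivot$ and $\pivot$ itself vary with $\varepsilon$, and tightness of the constant $62.5$ is reached only in the limit $\varepsilon \to 1^-$, so the argument needs to cite $\varepsilon < 1$ explicitly (to control $\varepsilon/(1+\varepsilon)$) rather than relying solely on the lower end $\varepsilon \ge \sqrt{2}-1$ of the case. The remaining three bounds are essentially algebraic once the range of $\Exp{\Cnt{F}{m^\ast - k}}$ is fixed.
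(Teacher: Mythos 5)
Your proof is correct and follows the paper's approach: apply Lemma~\ref{lm: inequality} with $\beta=1/(1+\varepsilon)$ (for $L_{m^\ast-1}, L_{m^\ast-2}$), $\gamma=1+\varepsilon/(1+\varepsilon)$ (for $U'_{m^\ast}$), and a $\beta<1$ controlled by $\thresh/\pivot<3/2$ (for $T_{m^\ast-3}$), together with $\Exp{\Cnt{F}{m^\ast}}\ge\pivot/2$. The paper only writes out statement~4 (identically to your argument) and defers statements~1--3 to the proof of Lemma~2 in~\cite{CMV16}; your derivation fills in exactly those omitted steps, reaches the same constants, and correctly identifies that the case split on $\varepsilon$ enters only through the bound on $\thresh/\pivot$ in statement~1.
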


\begin{proof}
	Following the proof of Lemma 2 in~\cite{CMV16}, we can prove statements 1, 2, and 3. 
	To prove statement 4, replacing $\gamma$ with $(1+\frac{\varepsilon}{1+\varepsilon})$ in Lemma~\ref{lm: inequality}
	and employing $\Exp{\Cnt{F}{m^\ast}} \ge \pivot/2$, we obtain $\Prb{U'_{m^\ast}} \le \frac{1}{1+\left(\frac{\varepsilon}{1+\varepsilon}\right)^2\pivot/2} \le \frac{1}{5.92}$.
	\qed
\end{proof}

Now we prove the upper bounds of $\Prb{L}$ and $\Prb{U}$ in Lemma~\ref{thm: complete L U bound} for $\sqrt{2}-1\le\varepsilon<1$.
The proof for other $\varepsilon$ is deferred to \Cref{sec: appendix roundmc} due to the page limit.

\LUbound*

\begin{proof}
	We prove the case of $\sqrt{2}-1\le\varepsilon<1$.
	The proof for other $\varepsilon$ is deferred to \Cref{sec: appendix roundmc}.
	Let us first bound $\Prb{L}$.
	Following {\FibBinSearch} in~\cite{CMV16}, we have
	\begin{align}
		\label{eq: L expansion}
		\Prb{L} = \left[ \bigcup_{i\in\{1,...,n\}} \left( \overline{T_{i-1}} \cap T_i \cap L_i \right) \right]
	\end{align}
	Equation~\ref{eq: L expansion} can be simplified by three observations labeled $O1, O2$ and $O3$ below.
	\begin{description}
		\item[$O1:$] $\forall i \le m^\ast-3, T_i \subseteq T_{i+1}$. Therefore,
		\begin{align*}
			\bigcup_{i\in\{1,...,m^\ast-3\}} (\overline{T_{i-1}} \cap T_i \cap L_i) \subseteq \bigcup_{i\in\{1,...,m^\ast-3\}} T_i \subseteq T_{m^\ast-3}
		\end{align*}
		\item[$O2:$] For $i \in \{m^\ast-2, m^\ast-1\}$, we have
		\begin{align*}
			\bigcup_{i\in\{m^\ast-2, m^\ast-1\}} (\overline{T_{i-1}} \cap T_i \cap L_i) \subseteq L_{m^\ast-2} \cup L_{m^\ast-1}
		\end{align*}
		\item[$O3:$] $\forall i \ge m^\ast$, since rounding $\Cnt{F}{i}$ up to $\frac{\pivot}{\sqrt{2}}$ and $m^\ast \ge \log_2|\satisfying{F}| - \log_2\left(\pivot\right)$, we have 
		$2^i \times \Cnt{F}{i} \ge 2^{m^\ast} \times \frac{\pivot}{\sqrt{2}} \ge \frac{|\satisfying{F}|}{\sqrt{2}} \ge \frac{|\satisfying{F}|}{1+\varepsilon}$. The last inequality follows from $\varepsilon \ge \sqrt{2}-1$.
		Then we have $\Cnt{F}{i} \ge \frac{\Exp{\Cnt{F}{i}}}{1+\varepsilon}$.
		Therefore, $L_i=\emptyset$ for $i \ge m^\ast$ and we have
		\begin{align*}
			\bigcup_{i\in\{m^\ast, ..., n\}} (\overline{T_{i-1}} \cap T_i \cap L_i) = \emptyset
		\end{align*}
	\end{description}
	Following the observations $O1$, $O2$, and $O3$, we simplify Equation~\ref{eq: L expansion} and obtain
	\begin{align*}
		\Prb{L} \le \Prb{T_{m^\ast-3}} + \Prb{L_{m^\ast-2}} + \Prb{L_{m^\ast-1}}
	\end{align*}
	Employing Lemma~\ref{lm: bound} gives $\Prb{L} \le 0.157$.
	
	Now let us bound $\Prb{U}$. Similarly, following {\FibBinSearch} in~\cite{CMV16}, we have
	\begin{align}
		\label{eq: U expansion}
		\Prb{U} = \left[ \bigcup_{i\in\{1,...,n\}} \left( \overline{T_{i-1}} \cap T_i \cap U_i \right) \right]
	\end{align}
	We derive the following observations $O4$ and $O5$.
	\begin{description}
		\item[$O4:$] $\forall i \le m^\ast-1$, since $m^\ast \le \log_2|\satisfying{F}| - \log_2\left(\pivot\right) + 1$, 
		we have $2^i \times \Cnt{F}{i} \le 2^{m^\ast-1} \times \thresh \le |\satisfying{F}|\left( 1+\frac{\varepsilon}{1+\varepsilon} \right)$.
		Then we obtain $\Cnt{F}{i} \le \Exp{\Cnt{F}{i}}\left( 1+\frac{\varepsilon}{1+\varepsilon} \right)$.
		Therefore, $T_i \cap U'_i = \emptyset$ for $i \le m^\ast-1$ and we have
		\begin{align*}
			\bigcup_{i\in\{1,...,m^\ast-1\}} \left( \overline{T_{i-1}} \cap T_i \cap U_i \right)
			\subseteq \bigcup_{i\in\{1,...,m^\ast-1\}} \left( \overline{T_{i-1}} \cap T_i \cap U'_i \right) = \emptyset
		\end{align*}
		\item[$O5:$] $\forall i \ge m^\ast$, $\overline{T_i}$ implies $\Cnt{F}{i} > \thresh$, and then we have
		$2^i \times \Cnt{F}{i} > 2^{m^\ast} \times \thresh \ge |\satisfying{F}|\left( 1+\frac{\varepsilon}{1+\varepsilon} \right)$.
		The second inequality follows from $m^\ast \ge \log_2|\satisfying{F}| - \log_2\left(\pivot\right)$.
		Then we obtain $\Cnt{F}{i} > \Exp{\Cnt{F}{i}}\left( 1+\frac{\varepsilon}{1+\varepsilon} \right)$.
		Therefore, $\overline{T_i} \subseteq U'_i$ for $i \ge m^\ast$. 
		Since $\forall i, \overline{T_i} \subseteq \overline{T_{i-1}}$, we have 
		\begin{align}
			\bigcup_{i\in\{m^\ast,...,n\}} \left( \overline{T_{i-1}} \cap T_i \cap U_i \right) 
			& \subseteq \bigcup_{i\in\{m^\ast+1,...,n\}} \overline{T_{i-1}} \cup ( \overline{T_{m^\ast-1}} \cap T_{m^\ast} \cap U_{m^\ast} ) \nonumber \\
			& \subseteq \overline{T_{m^\ast}} \cup ( \overline{T_{m^\ast-1}} \cap T_{m^\ast} \cap U_{m^\ast} ) \nonumber \\
			& \subseteq \overline{T_{m^\ast}} \cup U_{m^\ast} \nonumber \\
			& \subseteq U'_{m^\ast} \label{eq: round U}
		\end{align}
		Remark that for $\sqrt{2}-1\le\varepsilon<1$, we round $\Cnt{F}{m^\ast}$ up to $\frac{\pivot}{\sqrt{2}}$, 
		and we have $2^{m^\ast}\times\frac{\pivot}{\sqrt{2}} \le |\satisfying{F}|(1+\varepsilon)$, 
		which means \emph{rounding} doesn't affect the event $U_{m^\ast}$; therefore, Inequality~\ref{eq: round U} still holds.
	\end{description}
	Following the observations $O4$ and $O5$, we simplify Equation~\ref{eq: U expansion} and obtain
	\begin{align*}
		\Prb{U} \le \Prb{U'_{m^\ast}}
	\end{align*}
	Employing Lemma~\ref{lm: bound} gives $\Prb{U} \le 0.169$.
	\qed
\end{proof}

\section{Experimental Evaluation} \label{sec: experiment}
It is perhaps worth highlighting that both {\ApproxMCCore} and {\ApproxMCSixCore} invoke the underlying SAT solver on identical queries; the only difference between {\ApproxMCSix} and {\ApproxMC} lies in what estimate to return and how often {\ApproxMCCore} and {\ApproxMCSixCore} are invoked. From this viewpoint, one would expect that theoretical improvements would also lead to improved runtime performance. To provide further evidence,  we perform extensive empirical evaluation and compare {\ApproxMCSix}'s performance against the  current state-of-the-art model counter, {\ApproxMC}~\cite{SGM20}.  We use {\Arjun} as a pre-processing tool. We used the latest version of {\ApproxMC}, called {\ApproxMCFour}; an entry based on {\ApproxMCFour} won the Model Counting Competition 2022. 

Previous comparisons of {\ApproxMC} have been performed on a set of 1896 instances, but the latest version of {\ApproxMC} is able to solve almost all the instances when these instances are pre-processed by {\Arjun}. Therefore, we sought to construct a new comprehensive set of 1890 instances derived from various sources, 
including Model Counting Competitions 2020-2022~\cite{FHH20,HF21,HF22}, program synthesis~\cite{ABJM+13}, quantitative control improvisation~\cite{GVF22}, quantification of software properties~\cite{TW21}, and adaptive chosen ciphertext attacks~\cite{BZG20}. As noted earlier, our technique extends to projected model counting, and our benchmark suite indeed comprises 772 projected model counting instances.

Experiments were conducted on a high-performance computer cluster, with each node consisting of 2xE5-2690v3 CPUs featuring 2x12 real cores and 96GB of RAM.
For each instance, a counter was run on a single core, with a time limit of 5000 seconds and a memory limit of 4GB.
To compare runtime performance, we use the PAR-2 score, a standard metric in the SAT community.
Each instance is assigned a score that is the number of seconds it takes the corresponding tool to complete execution successfully. 
In the event of a timeout or memory out, the score is the doubled time limit in seconds.
The PAR-2 score is then calculated as the average of all the instance scores.
We also report the speedup of {\ApproxMCSix} over {\ApproxMCFour}, calculated as the ratio of the runtime of {\ApproxMCFour} to that of {\ApproxMCSix} on instances solved by both counters. We set $\delta$ to 0.001 and $\varepsilon$ to 0.8.

Specifically, we aim to address the following research questions:
\begin{description}
	\item[RQ 1] How does the runtime performance of {\ApproxMCSix} compare to that of {\ApproxMCFour}?
	\item[RQ 2] How does the accuracy of the counts computed by {\ApproxMCSix} compare to that of the exact count?
\end{description}

\paragraph{Summary} In summary, {\ApproxMCSix} consistently outperforms {\ApproxMCFour}. 
Specifically, it solved 204 additional instances and reduced the PAR-2 score by 1063 seconds in comparison to {\ApproxMCFour}. 
The average speedup of {\ApproxMCSix} over {\ApproxMCFour} was 4.68.
In addition, {\ApproxMCSix} provided a high-quality approximation with an average observed error of 0.1, much smaller than the theoretical error tolerance of 0.8.

\subsection{RQ1. Overall Performance}
Figure~\ref{fig: time} compares the counting time of {\ApproxMCSix} and {\ApproxMCFour}.
The $x$-axis represents the index of the instances, sorted in ascending order of runtime,
and the $y$-axis represents the runtime for each instance.
A point $(x,y)$ indicates that  a counter can solve $x$ instances within $y$ seconds.
Thus, for a given time limit $y$, a counter whose curve is on the right has solved more instances than a counter on the left.
It can be seen in the figure that {\ApproxMCSix} consistently outperforms {\ApproxMCFour}.
In total, {\ApproxMCSix} solved 204 more instances than {\ApproxMCFour}.

Table~\ref{tab: overall perf} provides a detailed comparison between {\ApproxMCSix} and {\ApproxMCFour}. 
The first column lists three measures of interest: the number of solved instances, the PAR-2 score, and the speedup of {\ApproxMCSix} over {\ApproxMCFour}. 
The second and third columns show the results for {\ApproxMCFour} and {\ApproxMCSix}, respectively. 
The second column indicates that {\ApproxMCFour} solved 998 of the 1890 instances and achieved a PAR-2 score of 4934. 
The third column shows that {\ApproxMCSix} solved 1202 instances and achieved a PAR-2 score of 3871.
In comparison, {\ApproxMCSix} solved 204 more instances and reduced the PAR-2 score by 1063 seconds in comparison to {\ApproxMCFour}. 
The geometric mean of the speedup for {\ApproxMCSix} over {\ApproxMCFour} is 4.68.
This speedup was calculated only for instances solved by both counters.

\begin{table}[h]
	\centering
	\begin{tabular}{c c c}
		\toprule
		& {\ApproxMCFour} & {\ApproxMCSix} \\
		\midrule
		\#Solved & 998 & 1202 \\
		PAR-2 score & 4934 & 3871 \\
		Speedup & --- & 4.68\\
		\bottomrule
	\end{tabular}
	\caption{The number of solved instances and PAR-2 score for {\ApproxMCSix} versus {\ApproxMCFour} on 1890 instances. The geometric mean of the speedup of {\ApproxMCSix} over {\ApproxMCFour} is also reported.}
	\label{tab: overall perf}
\end{table}

\begin{figure}[h!]
	\centering
	\includegraphics[scale=0.35]{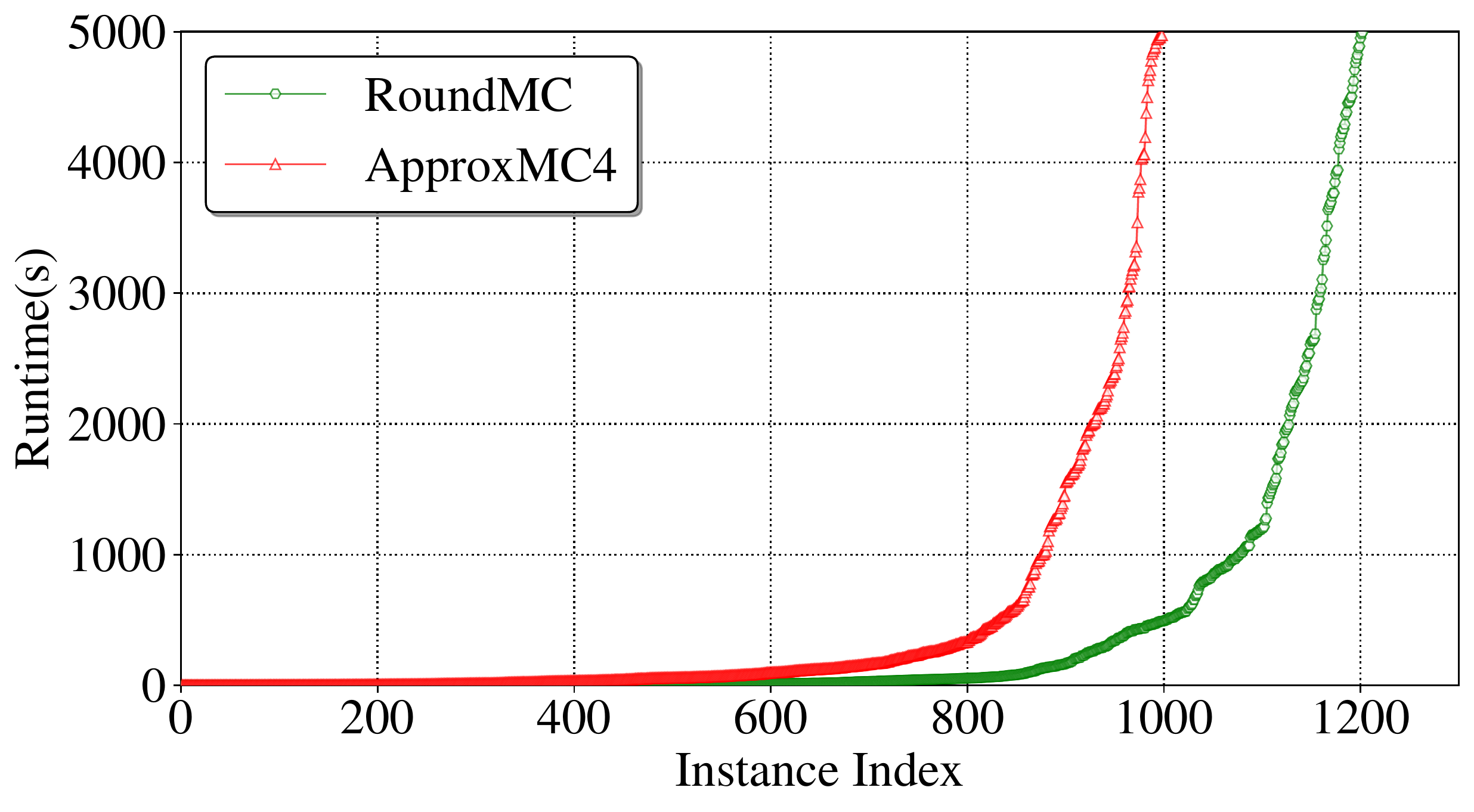}
	\caption{Comparison of counting times for {\ApproxMCSix} and {\ApproxMCFour}.}
	\label{fig: time}
\end{figure}

\subsection{RQ2. Approximation Quality}
We used the state-of-the-art probabilistic exact model counter {\ganak} to compute the exact model count and compare it to the results of {\ApproxMCSix}.
We collected statistics on instances solved by both {\ganak} and {\ApproxMCSix}. 
Figure~\ref{fig: quality} presents results for a subset of instances. 
The x-axis represents the index of instances sorted in ascending order by the number of solutions,
and the y-axis represents the number of solutions in a log scale. 
Theoretically, the approximate count from {\ApproxMCSix} should be within the range of $|\satisfying{F}| \cdot 1.8$ and $|\satisfying{F}|/1.8$ with probability $0.999$, 
where $|\satisfying{F}|$ denotes the exact count returned by {\ganak}.
The range is indicated by the upper and lower bounds, represented by the curves $y=|\satisfying{F}| \cdot 1.8$ and $y=|\satisfying{F}|/1.8$, respectively. 
Figure~\ref{fig: quality} shows that the approximate counts from {\ApproxMCSix} fall within the expected range $\left[|\satisfying{F}|/1.8, |\satisfying{F}|\cdot1.8\right]$ for all instances 
except for four points slightly above the upper bound. 
These four outliers are due to a bug in the preprocessor {\Arjun} that probably depends on the version of the C++ compiler and will be fixed in the future. 
We also calculated the observed error, which is the mean relative difference between the approximate and exact counts in our experiments, i.e., $\max\{\finalestimate/|\satisfying{F}|-1, |\satisfying{F}|/\finalestimate-1\}$. 
The overall observed error was 0.1, which is significantly smaller than the theoretical error tolerance of 0.8.

\begin{figure}[h]
	\centering
	\includegraphics[scale=0.35]{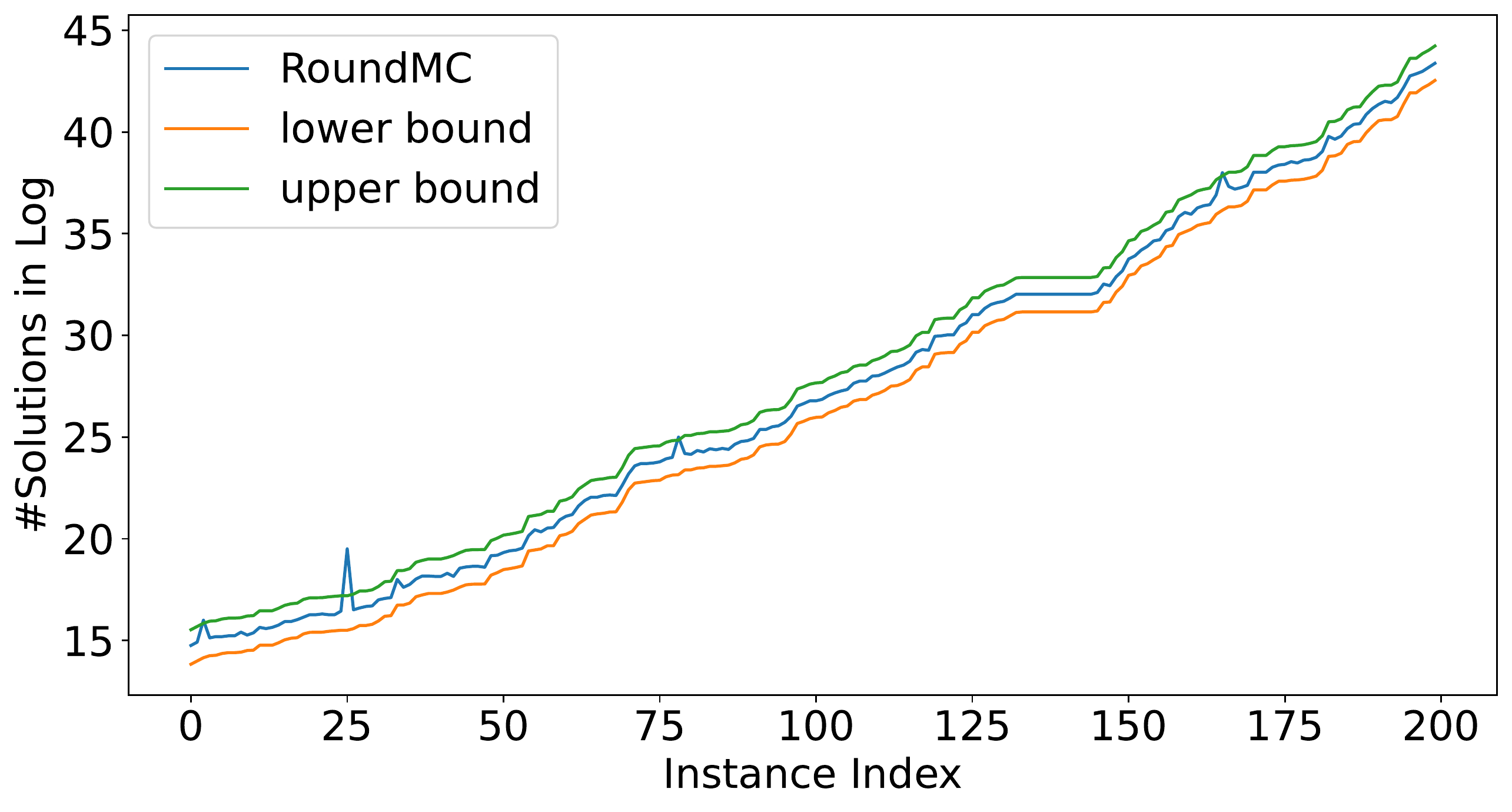}
	\caption{Comparison of approximate counts from {\ApproxMCSix} to exact counts from {\ganak}.}
	\label{fig: quality}
\end{figure}

\section{Conclusion} \label{sec: conclusion}
In this paper, we addressed the scalability challenges faced by {\ApproxMC} in the smaller $\delta$ range. To this end, we proposed a \emph{rounding}-based algorithm, {\ApproxMCSix}, which reduces the number of estimations required by 84\% while providing the same $(\varepsilon,\delta$)-guarantees. Our empirical evaluation on 1890 instances shows that {\ApproxMCSix} solved 204 more instances and achieved a reduction in PAR-2 score of 1063 seconds. Furthermore,  {\ApproxMCSix} achieved a $4\times$ speedup over {\ApproxMC} on the instances that both {\ApproxMCSix} and {\ApproxMC} could solve.

\bibliographystyle{splncs04}
\bibliography{main}

\newpage
\appendix

\section{Proof of Proposition~\ref{prop: 2 universal}}
\label{sec: appendix preliminary}
\begin{proof}
	For $\forall y \in \{0,1\}^n, \alpha^{(m)} \in \{0,1\}^m$, let $\gamma_{y,\alpha^{(m)}}$ be an indicator variable that is 1 when $h^{(m)}(y)=\alpha^{(m)}$.
	According to the definition of strongly 2-universal function, we obtain $\forall x,y \in \{0,1\}^n, \Exp{\gamma_{y,\alpha^{(m)}}} = \frac{1}{2^m}$ and $\Exp{\gamma_{x,\alpha^{(m)}}\cdot\gamma_{y,\alpha^{(m)}}} = \frac{1}{2^{2m}}$.
	To prove Equation~\ref{eq: exp}, we obtain 
	\begin{align*}
		\Exp{\Cnt{F}{m}} = \Exp{\sum_{y\in \satisfying{F}} \gamma_{y,\alpha^{(m)}}} = \sum_{y\in \satisfying{F}} \Exp{\gamma_{y,\alpha^{(m)}}} = \frac{|\satisfying{F}|}{2^m}
	\end{align*}
	To prove Equation~\ref{eq: var exp}, we derive 
	\begin{align*}
		\Exp{\Cnt{F}{m}^2} &= \Exp{\sum_{y\in \satisfying{F}} \gamma^2_{y,\alpha^{(m)}} + \sum_{x\not=y\in \satisfying{F}} \gamma_{x,\alpha^{(m)}} \cdot \gamma_{y,\alpha^{(m)}}} \\
						   &= \Exp{\sum_{y\in \satisfying{F}} \gamma_{y,\alpha^{(m)}}} + \sum_{x\not=y\in \satisfying{F}} \Exp{\gamma_{x,\alpha^{(m)}} \cdot \gamma_{y,\alpha^{(m)}}} \\
						   &= \Exp{\Cnt{F}{m}} + \frac{|\satisfying{F}|(|\satisfying{F}|-1)}{2^{2m}}
	\end{align*}
	Then, we obtain 
	\begin{align*}
		\sigma^2\left[\Cnt{F}{m}\right] &= \Exp{\Cnt{F}{m}^2} - \Exp{\Cnt{F}{m}}^2 \\
										&= \Exp{\Cnt{F}{m}} + \frac{|\satisfying{F}|(|\satisfying{F}|-1)}{2^{2m}} - \left(\frac{|\satisfying{F}|}{2^m}\right)^2 \\
										&= \Exp{\Cnt{F}{m}} - \frac{|\satisfying{F}|}{2^{2m}} \\
										&\le \Exp{\Cnt{F}{m}}
	\end{align*}
	\qed
\end{proof}

\section{Weakness of Proposition~\ref{prop:eq-bound}}
The following proposition states that Proposition~\ref{prop:eq-bound} provides a loose upper bound for $\Prb{\Obad_t}$.
\begin{proposition}
	\label{prop: untight}
	Assuming $t$ is odd, we have:
	\begin{align*}
		\Prb{\Obad_t} < \eta(t, \lceil t/2 \rceil, \Prb{L \cup U})
	\end{align*}
\end{proposition}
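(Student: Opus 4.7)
The plan is to combine Lemma~\ref{thm: error bound} with the observation that $L \cap U = \emptyset$, which reduces Proposition~\ref{prop: untight} to a purely combinatorial statement about trinomial trials. First, by Lemma~\ref{thm: error bound}, $\Prb{\Obad_t} = \eta(t, \lceil t/2 \rceil, \Prb{L}) + \eta(t, \lceil t/2 \rceil, \Prb{U})$. Since a single estimate cannot simultaneously be smaller than $|\satisfying{F}|/(1+\varepsilon)$ and larger than $(1+\varepsilon)|\satisfying{F}|$, the events $L$ and $U$ are disjoint, so $\Prb{L \cup U} = \Prb{L} + \Prb{U}$. Writing $p_L = \Prb{L}$ and $p_U = \Prb{U}$, the target reduces to the binomial-tail inequality $\eta(t, \lceil t/2 \rceil, p_L) + \eta(t, \lceil t/2 \rceil, p_U) < \eta(t, \lceil t/2 \rceil, p_L + p_U)$.

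Next, I would pass to the natural trinomial coupling: imagine $t$ independent trials, each producing outcome $L$ with probability $p_L$, outcome $U$ with probability $p_U$, and outcome $N$ (neither) with probability $1 - p_L - p_U$, and let $X_L, X_U$ be the corresponding counts. Then $X_L$, $X_U$, and $X_L + X_U$ are each binomially distributed with parameters $(t, p_L)$, $(t, p_U)$, and $(t, p_L + p_U)$, so the three $\eta$-values equal $\Prb{X_L \ge \lceil t/2 \rceil}$, $\Prb{X_U \ge \lceil t/2 \rceil}$, and $\Prb{X_L + X_U \ge \lceil t/2 \rceil}$, respectively. Because $t$ is odd, $\lceil t/2 \rceil = (t+1)/2 > t/2$, so the events $\{X_L \ge \lceil t/2 \rceil\}$ and $\{X_U \ge \lceil t/2 \rceil\}$ are pairwise disjoint, and both are contained in $\{X_L + X_U \ge \lceil t/2 \rceil\}$. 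Hence the desired inequality is equivalent to showing that this containment is strict.

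To establish strict containment, I would exhibit a single outcome with positive probability in the difference. For $t \ge 3$ odd, the configuration $(X_L, X_U, X_N) = (1, (t-1)/2, (t-1)/2)$ satisfies $X_L + X_U = \lceil t/2 \rceil$ while simultaneously $X_L < \lceil t/2 \rceil$ and $X_U < \lceil t/2 \rceil$; its multinomial probability $\binom{t}{1,(t-1)/2,(t-1)/2} p_L p_U^{(t-1)/2}(1-p_L-p_U)^{(t-1)/2}$ is strictly positive whenever $p_L, p_U > 0$ and $p_L + p_U < 1$. The main subtlety—and the chief obstacle—lies in the degenerate boundary cases: if $p_L = 0$, $p_U = 0$, $p_L + p_U = 1$, or $t = 1$, the two sides collapse to equality. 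Consequently, the formal statement needs the mild nondegeneracy hypotheses $0 < p_L, p_U$ with $p_L + p_U < 1$ (and $t \ge 3$), all of which are always satisfied in the {\ApproxMC} setting where $p_L + p_U$ is comfortably bounded away from both $0$ and $1$. With this hypothesis stated explicitly, the strict inequality in Proposition~\ref{prop: untight} follows directly from the strict set inclusion above.
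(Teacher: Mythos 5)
Your proof is correct and arrives at the same conclusion, but by a noticeably different organization. The paper's own argument works directly at the level of events: it identifies $\eta(t,\lceil t/2\rceil,\Prb{L\cup U})$ as the probability of the event $\{\sum_i (I^L_i\vee I^U_i)\ge\lceil t/2\rceil\}$, then exhibits one concrete sequence of trial outcomes (with $\lceil t/4\rceil$ underestimates, $\lceil t/2\rceil-\lceil t/4\rceil$ overestimates, and $\lfloor t/2\rfloor$ good estimates) that lies in this event but not in $\Obad_t$, and concludes the inclusion $\Obad_t\subsetneq\{\sum_i(I^L_i\vee I^U_i)\ge\lceil t/2\rceil\}$ is strict. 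You instead invoke Lemma~\ref{thm: error bound} to rewrite $\Prb{\Obad_t}$ as a sum of two binomial tails, reduce to the purely combinatorial inequality $\eta(t,\lceil t/2\rceil,p_L)+\eta(t,\lceil t/2\rceil,p_U)<\eta(t,\lceil t/2\rceil,p_L+p_U)$, realize all three quantities on a common trinomial probability space, and show the relevant inclusion of events is strict via the witness $(X_L,X_U,X_N)=(1,(t-1)/2,(t-1)/2)$. The two counterexamples differ in detail but play the same role. Where your version adds genuine value is in making the nondegeneracy hypotheses explicit: the strict inequality fails in the boundary cases $p_L=0$, $p_U=0$, $p_L+p_U=1$, or $t=1$, and you state this. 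The paper's proof silently assumes the constructed scenario has positive probability and that $t\ge 3$, so your treatment is more careful on this point. One small nit: what the argument needs is that each of $p_L$ and $p_U$ is individually positive (and $p_L+p_U<1$), not merely that the sum $p_L+p_U$ is bounded away from $0$ and $1$; your closing sentence conflates these slightly, though the intent is clear.
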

\begin{proof}
	We will now construct a case counted by $\eta(t, \lceil t/2 \rceil, \Prb{L \cup U})$ but not contained within the event $\Obad_t$.
	Let $I^L_i$ be an indicator variable that is 1 when {\ApproxMCCore} returns a $\solCount$ less than $ \frac{|\satisfying{F}|}{1+\varepsilon}$, indicating the occurrence of event $L$ in the $i$-th repetition.
	Let $I^U_i$ be an indicator variable that is 1 when {\ApproxMCCore} returns a $\solCount$ greater than $(1+\varepsilon)|\satisfying{F}|$, indicating the occurrence of event $U$ in the $i$-th repetition.
	Consider a scenario where $I^L_i=1$ for $i=1, 2, ..., \left\lceil\frac{t}{4}\right\rceil$ 
	, $I^U_j=1$ for $j=\left\lceil\frac{t}{4}\right\rceil+1, ..., \left\lceil\frac{t}{2}\right\rceil$ 
	, and $I^L_k=I^U_k=0$ for $k>\left\lceil\frac{t}{2}\right\rceil$.
	$\eta(t, \lceil t/2 \rceil, \Prb{L \cup U})$ represents $\sum_{i=1}^t(I^L_i \vee I^U_i) \ge \lceil \frac{t}{2} \rceil$.
	We can see that this case is included in $\sum_{i=1}^t(I^L_i \vee I^U_i) \ge \lceil \frac{t}{2} \rceil$ 
	and therefore counted by $\eta(t, \lceil t/2 \rceil, \Prb{L \cup U})$ 
	since there are $\left\lceil\frac{t}{2}\right\rceil$ estimates outside the {\PAC} range.
	However, this case means that $\left\lceil\frac{t}{4}\right\rceil$ estimates fall within the range less than $\frac{|\satisfying{F}|}{1+\varepsilon}$
	and $\left\lceil\frac{t}{2}\right\rceil - \left\lceil\frac{t}{4}\right\rceil$ estimates fall within the range greater than $(1+\varepsilon)|\satisfying{F}|$,
	while the remaining $\left\lfloor \frac{t}{2} \right\rfloor$ estimates correctly fall within the range $\left[ \frac{|\satisfying{F}|}{1+\varepsilon}, (1 + \varepsilon)|\satisfying{F}| \right]$.
	Therefore, after sorting all the estimates, {\ApproxMCSix} returns a correct estimate since the median falls within the {\PAC} range $\left[ \frac{|\satisfying{F}|}{1+\varepsilon}, (1 + \varepsilon)|\satisfying{F}| \right]$.
	In other words, this case is out of the event $\Obad_t$.
	In conclusion, there is a scenario that is out of the event $\Obad_t$, undesirably included in expression $\sum_{i=1}^t(I^L_i \vee I^U_i) \ge \lceil \frac{t}{2} \rceil$ and counted by $\eta(t, \lceil t/2 \rceil, \Prb{L \cup U})$, 
	which means $\Prb{\Obad_t}$ is strictly less than $\eta(t, \lceil t/2 \rceil, \Prb{L \cup U})$.
	\qed
\end{proof}

\section{Proof of $p_{max} \le 0.36$ for {\ApproxMC}}
\label{sec: appendix appmc}
\begin{proof}
	We prove the case of $\sqrt{2}-1\le\varepsilon<1$.
	Similarly to the proof in Section~\ref{sec: proof one case}, we aim to bound $\Prb{L}$ by the following equation:
	\begin{align*}
		\Prb{L} = \left[ \bigcup_{i\in\{1,...,n\}} \left( \overline{T_{i-1}} \cap T_i \cap L_i \right) \right]
		\tag{\ref{eq: L expansion} revisited}
	\end{align*}
	which can be simplified by three observations labeled $O1, O2$ and $O3$ below.
	\begin{description}
		\item[$O1:$] $\forall i \le m^\ast-3, T_i \subseteq T_{i+1}$. Therefore,
		\begin{align*}
			\bigcup_{i\in\{1,...,m^\ast-3\}} (\overline{T_{i-1}} \cap T_i \cap L_i) \subseteq \bigcup_{i\in\{1,...,m^\ast-3\}} T_i \subseteq T_{m^\ast-3}
		\end{align*}
		\item[$O2:$] For $i \in \{m^\ast-2, m^\ast-1\}$, we have
		\begin{align*}
			\bigcup_{i\in\{m^\ast-2, m^\ast-1\}} (\overline{T_{i-1}} \cap T_i \cap L_i) \subseteq L_{m^\ast-2} \cup L_{m^\ast-1}
		\end{align*}
		\item[$O3:$] $\forall i \ge m^\ast$, $\overline{T_i}$ implies $\Cnt{F}{i} > \thresh$ and then we have
		$2^i \times \Cnt{F}{i} > 2^{m^\ast} \times \thresh \ge |\satisfying{F}|\left( 1+\frac{\varepsilon}{1+\varepsilon} \right)$.
		The second inequality follows from $m^\ast \ge \log_2|\satisfying{F}| - \log_2\left(\pivot\right)$.
		Then we obtain $\left( \Cnt{F}{i} > \Exp{\Cnt{F}{i}}\left( 1+\frac{\varepsilon}{1+\varepsilon} \right) \right)$.
		Therefore, $\overline{T_i} \subseteq U'_i$ for $i \ge m^\ast$. 
		Since $\forall i, \overline{T_i} \subseteq \overline{T_{i-1}}$, we have 
		\begin{align*}
			\bigcup_{i\in\{m^\ast,...,n\}} \left( \overline{T_{i-1}} \cap T_i \cap L_i \right) 
			& \subseteq \bigcup_{i\in\{m^\ast+1,...,n\}} \overline{T_{i-1}} \cup ( \overline{T_{m^\ast-1}} \cap T_{m^\ast} \cap L_{m^\ast} ) \\
			& \subseteq \overline{T_{m^\ast}} \cup ( \overline{T_{m^\ast-1}} \cap T_{m^\ast} \cap L_{m^\ast} ) \\
			& \subseteq \overline{T_{m^\ast}} \cup L_{m^\ast} \\
			& \subseteq U'_{m^\ast} \cup L_{m^\ast}
		\end{align*}
	\end{description}
	Following the observations $O1, O2$ and $O3$, we simplify Equation~\ref{eq: L expansion} and obtain
	\begin{align*}
		\Prb{L} \le \Prb{T_{m^\ast-3}} + \Prb{L_{m^\ast-2}} + \Prb{L_{m^\ast-1}} + \Prb{U'_{m^\ast} \cup L_{m^\ast}}
	\end{align*}
	Employing Lemma 2 in~\cite{CMV16} gives $\Prb{L} \le 0.36$. Note that $U$ in~\cite{CMV16} represents $U'$ of our definition.
	
	Then, following the $O4$ and $O5$ in Section~\ref{sec: proof one case}, we obtain
	\begin{align*}
		\Prb{U} \le \Prb{U'_{m^\ast}}
	\end{align*}
	Employing Lemma~\ref{lm: bound} gives $\Prb{U} \le 0.169$. 
	As a result, $p_{max} \le 0.36$.
	\qed
\end{proof}

\section{Proof of Lemma~\ref{thm: complete L U bound}}
\label{sec: appendix roundmc}
We restate the lemma below and prove the statements section by section. The proof for $\sqrt{2}-1\le\varepsilon<1$ has been shown in Section~\ref{sec: proof one case}.
\LUbound*

\subsection{Proof of $\Prb{L} \le 0.262$ for $\varepsilon<\sqrt{2}-1$}
We first consider two cases: $\Exp{\Cnt{F}{m^\ast}} < \frac{1+\varepsilon}{2}\thresh$ and $\Exp{\Cnt{F}{m^\ast}} \ge \frac{1+\varepsilon}{2}\thresh$,
and then merge the results to complete the proof.

\subsubsection{Case 1: $\Exp{\Cnt{F}{m^\ast}} < \frac{1+\varepsilon}{2}\thresh$}
\begin{lemma}
	\label{lm: bound eps<sqrt2-1 case 1}
	Given $\varepsilon<\sqrt{2}-1$, the following bounds hold: 
	\begin{enumerate}
		\item $\Prb{T_{m^\ast-2}} \le \frac{1}{29.67}$
		\item $\Prb{L_{m^\ast-1}} \le \frac{1}{10.84}$
	\end{enumerate}
\end{lemma}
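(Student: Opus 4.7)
The plan is to derive both bounds by direct application of Cantelli's inequality (Lemma~\ref{lm: inequality}), exactly in the spirit of Lemma~\ref{lm: bound}. The one auxiliary fact I need is the two-sided bracket $\pivot/2 \le \Exp{\Cnt{F}{m^\ast}} < \pivot$ that follows from the defining formula $m^\ast = \lfloor \log_2 |\satisfying{F}| - \log_2 \pivot + 1 \rfloor$ together with $\Exp{\Cnt{F}{m}} = |\satisfying{F}|/2^m$. Doubling this gives $\Exp{\Cnt{F}{m^\ast-1}} \ge \pivot$ and $\Exp{\Cnt{F}{m^\ast-2}} \ge 2\pivot$, and these are the only size estimates the proof will need.

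For statement~2, I would apply Lemma~\ref{lm: inequality}(1) with $\beta = 1/(1+\varepsilon)$ to $L_{m^\ast-1}$. The denominator of the resulting Cantelli bound is $1 + (\varepsilon/(1+\varepsilon))^2 \Exp{\Cnt{F}{m^\ast-1}}$, and plugging in $\Exp{\Cnt{F}{m^\ast-1}} \ge \pivot = 9.84(1+\varepsilon)^2/\varepsilon^2$ makes $(1+\varepsilon)^2/\varepsilon^2$ cancel against $(\varepsilon/(1+\varepsilon))^2$, leaving exactly $9.84$. This yields $\Prb{L_{m^\ast-1}} \le 1/10.84$, independent of $\varepsilon$; no part of the argument uses the Case~1 hypothesis or even the restriction $\varepsilon < \sqrt{2}-1$, so statement~2 is essentially a restatement of the bound already established in Lemma~\ref{lm: bound}.

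Statement~1 is the one that actually uses $\varepsilon < \sqrt{2}-1$. Here I would apply Lemma~\ref{lm: inequality}(1) with $\beta = \thresh/\Exp{\Cnt{F}{m^\ast-2}}$ to $T_{m^\ast-2}$; this is legitimate because $2\pivot > \thresh$, noting $\thresh = (1+\varepsilon/(1+\varepsilon))\pivot < 2\pivot$. The key quantity $(1-\beta)^2 \Exp{\Cnt{F}{m^\ast-2}}$ equals $(\Exp{\Cnt{F}{m^\ast-2}} - \thresh)^2/\Exp{\Cnt{F}{m^\ast-2}}$, which a one-line derivative check shows is increasing in $\Exp{\Cnt{F}{m^\ast-2}}$ on the regime $\Exp{\Cnt{F}{m^\ast-2}} > \thresh$. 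The worst case is therefore the endpoint $\Exp{\Cnt{F}{m^\ast-2}} = 2\pivot$, at which the quantity simplifies, via $2\pivot - \thresh = \pivot/(1+\varepsilon)$, to $\pivot/(2(1+\varepsilon)^2) = 4.92/\varepsilon^2$ after substituting $\pivot = 9.84(1+\varepsilon)^2/\varepsilon^2$. For $\varepsilon \le \sqrt{2}-1$ we have $\varepsilon^2 \le 3-2\sqrt{2}$, so $4.92/\varepsilon^2 \ge 28.67$, and Cantelli gives $\Prb{T_{m^\ast-2}} \le 1/29.67$, with equality exactly at $\varepsilon = \sqrt{2}-1$ --- which explains the appearance of this particular threshold in the case split.

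There is no conceptually hard step; the only care is making sure the algebra lands on the stated numerical constants. I would note, moreover, that the Case~1 hypothesis $\Exp{\Cnt{F}{m^\ast}} < (1+\varepsilon)\thresh/2$ is not needed for either bound in isolation --- both rely only on the lower bracket $\Exp{\Cnt{F}{m^\ast}} \ge \pivot/2$ --- so I expect the Case~1 hypothesis to enter only in the downstream step that assembles these bounds (together with an ``$L_i = \emptyset$ for $i \ge m^\ast$'' argument based on the rounding value $\sqrt{1+2\varepsilon}\,\pivot/2$) into the final $\Prb{L} \le 0.262$ conclusion.
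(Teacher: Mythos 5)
Your proof is correct and is essentially the paper's argument: both statements follow from Lemma~\ref{lm: inequality} together with $\Exp{\Cnt{F}{m^\ast-1}} \ge \pivot$, $\Exp{\Cnt{F}{m^\ast-2}} \ge 2\pivot$. The only cosmetic difference is in statement~1, where you set $\beta = \thresh/\Exp{\Cnt{F}{m^\ast-2}}$ and extract the worst case by monotonicity, while the paper directly substitutes the worst-case $\beta = 1 - \tfrac{\sqrt{2}}{4}$ coming from $\thresh < (2-\tfrac{\sqrt{2}}{2})\pivot$; the two give identical constants, and your closing remark that the Case~1 hypothesis is not used in this lemma is also accurate.
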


\begin{proof}
	Let's first prove the statement 1.
	For $\varepsilon<\sqrt{2}-1$, we have $\thresh < (2-\frac{\sqrt{2}}{2})\pivot$ and $\Exp{\Cnt{F}{m^\ast-2}} \ge 2\pivot$.
	Therefore, $\Prb{T_{m^\ast-2}} \le \Prb{\Cnt{F}{m^\ast-2} \le (1-\frac{\sqrt{2}}{4})\Exp{\Cnt{F}{m^\ast-2}}}$.
	Finally, employing Lemma~\ref{lm: inequality} with $\beta=1-\frac{\sqrt{2}}{4}$, 
	we obtain $\Prb{T_{m^\ast-2}} \le \frac{1}{1+(\frac{\sqrt{2}}{4})^2\cdot2\pivot} \le \frac{1}{1+(\frac{\sqrt{2}}{4})^2\cdot2\cdot9.84\cdot(1+\frac{1}{\sqrt{2}-1})^2} \le \frac{1}{29.67}$.
	To prove the statement 2, we employ Lemma~\ref{lm: inequality} with $\beta=\frac{1}{1+\varepsilon}$ and $\Exp{\Cnt{F}{m^\ast-1}} \ge \pivot$ to obtain
	$\Prb{L_{m^\ast-1}} \le \frac{1}{1+(1-\frac{1}{1+\varepsilon})^2\cdot\Exp{\Cnt{F}{m^\ast-1}}}
	\le \frac{1}{1+(1-\frac{1}{1+\varepsilon})^2\cdot9.84\cdot(1+\frac{1}{\varepsilon})^2} = \frac{1}{10.84}$.
	\qed
\end{proof}

Then, we prove that $\Prb{L} \le 0.126$ for $\Exp{\Cnt{F}{m^\ast}} < \frac{1+\varepsilon}{2}\thresh$.
\begin{proof}
	We aim to bound $\Prb{L}$ by the following equation:
	\begin{align*}
		\Prb{L} = \left[ \bigcup_{i\in\{1,...,n\}} \left( \overline{T_{i-1}} \cap T_i \cap L_i \right) \right]
		\tag{\ref{eq: L expansion} revisited}
	\end{align*}
	which can be simplified by the three observations labeled $O1, O2$ and $O3$ below.
	\begin{description}
		\item[$O1:$] $\forall i \le m^\ast-2, T_i \subseteq T_{i+1}$. Therefore,
		\begin{align*}
			\bigcup_{i\in\{1,...,m^\ast-2\}} (\overline{T_{i-1}} \cap T_i \cap L_i) \subseteq \bigcup_{i\in\{1,...,m^\ast-2\}} T_i \subseteq T_{m^\ast-2}
		\end{align*}
		\item[$O2:$] For $i = m^\ast-1$, we have
		\begin{align*}
			\overline{T_{m^\ast-2}} \cap T_{m^\ast-1} \cap L_{m^\ast-1} \subseteq L_{m^\ast-1}
		\end{align*}
		\item[$O3:$] $\forall i \ge m^\ast$, since rounding $\Cnt{F}{i}$ up to $\frac{\sqrt{1+2\varepsilon}}{2}\pivot$, we have 
		$\Cnt{F}{i} \ge \frac{\sqrt{1+2\varepsilon}}{2}\pivot \ge \frac{\thresh}{2} > \frac{\Exp{\Cnt{F}{m^\ast}}}{1+\varepsilon} \ge \frac{\Exp{\Cnt{F}{i}}}{1+\varepsilon}$. 
		The second last inequality follows from $\Exp{\Cnt{F}{m^\ast}} < \frac{1+\varepsilon}{2}\thresh$.
		Therefore, $L_i=\emptyset$ for $i \ge m^\ast$ and we have
		\begin{align*}
			\bigcup_{i\in\{m^\ast, ..., n\}} (\overline{T_{i-1}} \cap T_i \cap L_i) = \emptyset
		\end{align*}
	\end{description}
	Following the observations $O1, O2$ and $O3$, we simplify Equation~\ref{eq: L expansion} and obtain
	\begin{align*}
		\Prb{L} \le \Prb{T_{m^\ast-2}} + \Prb{L_{m^\ast-1}}
	\end{align*}
	Employing Lemma~\ref{lm: bound eps<sqrt2-1 case 1} gives $\Prb{L} \le 0.126$.
	\qed
\end{proof}

\subsubsection{Case 2: $\Exp{\Cnt{F}{m^\ast}} \ge \frac{1+\varepsilon}{2}\thresh$}
\begin{lemma}
	\label{lm: bound eps<sqrt2-1 case 2}
	Given $\Exp{\Cnt{F}{m^\ast}} \ge \frac{1+\varepsilon}{2}\thresh$, the following bounds hold: 
	\begin{enumerate}
		\item $\Prb{T_{m^\ast-1}} \le \frac{1}{10.84}$
		\item $\Prb{L_{m^\ast}} \le \frac{1}{5.92}$
	\end{enumerate}
\end{lemma}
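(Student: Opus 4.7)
The plan is to derive both bounds from a single application of Lemma~\ref{lm: inequality}(1) with $\beta = 1/(1+\varepsilon)$, the same Cantelli-style estimate used in Lemma~\ref{lm: bound eps<sqrt2-1 case 1}, combined with the hypothesis $\Exp{\Cnt{F}{m^\ast}} \ge \tfrac{1+\varepsilon}{2}\thresh$ and the closed form $\thresh = 9.84 \cdot \tfrac{1+2\varepsilon}{1+\varepsilon}(1+1/\varepsilon)^2$. The only real work is algebraic: I expect the $\varepsilon$-dependence in the resulting denominator to cancel and deliver a uniform numeric constant.

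For statement~1, the first step is to pass from the small-cell event to a scaled-expectation event. Since $\Exp{\Cnt{F}{m^\ast-1}} = 2\Exp{\Cnt{F}{m^\ast}} \ge (1+\varepsilon)\thresh$, the inclusion $T_{m^\ast-1} \subseteq \{\Cnt{F}{m^\ast-1} < \Exp{\Cnt{F}{m^\ast-1}}/(1+\varepsilon)\}$ is deterministic. Applying Lemma~\ref{lm: inequality}(1) with $\beta = 1/(1+\varepsilon)$ then gives $\Prb{T_{m^\ast-1}} \le 1/(1+\tfrac{\varepsilon^2}{(1+\varepsilon)^2}\Exp{\Cnt{F}{m^\ast-1}})$, which is at most $1/(1+\tfrac{\varepsilon^2}{1+\varepsilon}\thresh)$. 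Plugging in the closed form of $\thresh$, the factor $\varepsilon^2/(1+\varepsilon)$ cancels cleanly with $(1+\varepsilon)^2/\varepsilon^2$ inside $\thresh$, leaving $9.84(1+2\varepsilon) \ge 9.84$ for all positive $\varepsilon$, so the denominator is at least $10.84$ and the bound $1/10.84$ follows.

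Statement~2 is even more direct: Lemma~\ref{lm: inequality}(1) applied to $L_{m^\ast}$ with the same $\beta$ yields $\Prb{L_{m^\ast}} \le 1/(1+\tfrac{\varepsilon^2}{(1+\varepsilon)^2}\Exp{\Cnt{F}{m^\ast}})$, and the case hypothesis lower-bounds $\Exp{\Cnt{F}{m^\ast}}$ by $\tfrac{1+\varepsilon}{2}\thresh$. The same cancellation collapses the denominator to $1 + 4.92(1+2\varepsilon) \ge 5.92$, yielding $1/5.92$. No substantive probabilistic difficulty arises; the crux is merely that the definition of $\thresh$ is calibrated so that the $\varepsilon$-dependence disappears once the case hypothesis is invoked. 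I expect the final write-up to mirror the structure of Lemma~\ref{lm: bound eps<sqrt2-1 case 1} almost line for line.
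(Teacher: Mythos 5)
Your proposal is correct and mirrors the paper's own argument step for step: both use the relation $\Exp{\Cnt{F}{m^\ast-1}}=2\Exp{\Cnt{F}{m^\ast}}\ge(1+\varepsilon)\thresh$ to dominate $T_{m^\ast-1}$ by the scaled-expectation event, apply Lemma~\ref{lm: inequality}(1) with $\beta=\tfrac{1}{1+\varepsilon}$ for both statements, and let the closed form of $\thresh$ cancel the $\varepsilon$-dependence down to $9.84(1+2\varepsilon)$ and $4.92(1+2\varepsilon)$ respectively. No gaps; the computations match the paper exactly.
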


\begin{proof}
	Let's first prove the statement 1.
	From $\Exp{\Cnt{F}{m^\ast}} \ge \frac{1+\varepsilon}{2}\thresh$, we can derive $\Exp{\Cnt{F}{m^\ast-1}} \ge (1+\varepsilon)\thresh$.
	Therefore, $\Prb{T_{m^\ast-1}} \le \Prb{\Cnt{F}{m^\ast-1} \le \frac{1}{1+\varepsilon}\Exp{\Cnt{F}{m^\ast-1}}}$.
	Finally, employing Lemma~\ref{lm: inequality} with $\beta=\frac{1}{1+\varepsilon}$, 
	we obtain $\Prb{T_{m^\ast-1}} \le \frac{1}{1+(1-\frac{1}{1+\varepsilon})^2\cdot\Exp{\Cnt{F}{m^\ast-1}}} \le \frac{1}{1+(1-\frac{1}{1+\varepsilon})^2\cdot(1+\varepsilon)\thresh} 
	= \frac{1}{1+9.84(1+2\varepsilon)} \le \frac{1}{10.84}$.
	To prove the statement 2, we employ Lemma~\ref{lm: inequality} with $\beta=\frac{1}{1+\varepsilon}$ and $\Exp{\Cnt{F}{m^\ast}} \ge \frac{1+\varepsilon}{2}\thresh$ to obtain
	$\Prb{L_{m^\ast}} \le \frac{1}{1+(1-\frac{1}{1+\varepsilon})^2\cdot\Exp{\Cnt{F}{m^\ast}}}
	\le \frac{1}{1+(1-\frac{1}{1+\varepsilon})^2\cdot\frac{1+\varepsilon}{2}\thresh} 
	= \frac{1}{1+4.92(1+2\varepsilon)} \le \frac{1}{5.92}$.
	\qed
\end{proof}

Then, we prove that $\Prb{L} \le 0.262$ for $\Exp{\Cnt{F}{m^\ast}} \ge \frac{1+\varepsilon}{2}\thresh$.
\begin{proof}
	We aim to bound $\Prb{L}$ by the following equation:
	\begin{align*}
		\Prb{L} = \left[ \bigcup_{i\in\{1,...,n\}} \left( \overline{T_{i-1}} \cap T_i \cap L_i \right) \right]
		\tag{\ref{eq: L expansion} revisited}
	\end{align*}
	which can be simplified by the three observations labeled $O1, O2$ and $O3$ below.
	\begin{description}
		\item[$O1:$] $\forall i \le m^\ast-1, T_i \subseteq T_{i+1}$. Therefore,
		\begin{align*}
			\bigcup_{i\in\{1,...,m^\ast-1\}} (\overline{T_{i-1}} \cap T_i \cap L_i) \subseteq \bigcup_{i\in\{1,...,m^\ast-1\}} T_i \subseteq T_{m^\ast-1}
		\end{align*}
		\item[$O2:$] For $i = m^\ast$, we have
		\begin{align*}
			\overline{T_{m^\ast-1}} \cap T_{m^\ast} \cap L_{m^\ast} \subseteq L_{m^\ast}
		\end{align*}
		\item[$O3:$] $\forall i \ge m^\ast+1$, since rounding $\Cnt{F}{i}$ up to $\frac{\sqrt{1+2\varepsilon}}{2}\pivot$ and $m^\ast \ge \log_2|\satisfying{F}| - \log_2\left(\pivot\right)$, we have 
		$2^i \times \Cnt{F}{i} \ge 2^{m^\ast+1} \times \frac{\sqrt{1+2\varepsilon}}{2}\pivot \ge \sqrt{1+2\varepsilon}|\satisfying{F}| \ge \frac{|\satisfying{F}|}{1+\varepsilon}$.
		Then we have $\left( \Cnt{F}{i} \ge \frac{\Exp{\Cnt{F}{i}}}{1+\varepsilon} \right)$.
		Therefore, $L_i=\emptyset$ for $i \ge m^\ast+1$ and we have
		\begin{align*}
			\bigcup_{i\in\{m^\ast+1, ..., n\}} (\overline{T_{i-1}} \cap T_i \cap L_i) = \emptyset
		\end{align*}
	\end{description}
	Following the observations $O1, O2$ and $O3$, we simplify Equation~\ref{eq: L expansion} and obtain
	\begin{align*}
		\Prb{L} \le \Prb{T_{m^\ast-1}} + \Prb{L_{m^\ast}}
	\end{align*}
	Employing Lemma~\ref{lm: bound eps<sqrt2-1 case 2} gives $\Prb{L} \le 0.262$.
	\qed
\end{proof}

Combining the Case 1 and 2, we obtain $\Prb{L} \le \max\{0.126, 0.262\}=0.262$. Therefore, we prove the statement for {\ApproxMCSix}: $\Prb{L} \le 0.262$ for $\varepsilon<\sqrt{2}-1$.

\subsection{Proof of $\Prb{L} \le 0.085$ for $1\le\varepsilon<3$}
\begin{lemma}
	\label{lm: bound 1<=eps<3}
	Given $1\le\varepsilon<3$, the following bounds hold: 
	\begin{enumerate}
		\item $\Prb{T_{m^\ast-4}} \le \frac{1}{86.41}$
		\item $\Prb{L_{m^\ast-3}} \le \frac{1}{40.36}$
		\item $\Prb{L_{m^\ast-2}} \le \frac{1}{20.68}$
	\end{enumerate}
\end{lemma}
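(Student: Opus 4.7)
The plan is to mirror the proof of Lemma~\ref{lm: bound} closely, pushing each relevant index one step further below $m^\ast$ to exploit the additional slack afforded by the larger range of $\varepsilon$. All three bounds will follow from a single application of Lemma~\ref{lm: inequality}(1), combined with the elementary estimate
$\Exp{\Cnt{F}{m^\ast - k}} = 2^k\, \Exp{\Cnt{F}{m^\ast}} \ge 2^{k-1}\, \pivot,$
which is a direct consequence of $m^\ast \le \log_2|\satisfying{F}| - \log_2 \pivot + 1$ and Equation~\eqref{eq: exp}.

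For statements~2 and~3 I would apply Lemma~\ref{lm: inequality}(1) to $L_{m^\ast-k}$ with $\beta = 1/(1+\varepsilon)$, which lies in $(1/4,1/2] \subset (0,1)$ for $\varepsilon \in [1,3)$. Substituting the lower bound on $\Exp{\Cnt{F}{m^\ast-k}}$ gives
$\Prb{L_{m^\ast-k}} \le \bigl(1 + (\varepsilon/(1+\varepsilon))^2 \cdot 2^{k-1} \cdot 9.84 \cdot (1+1/\varepsilon)^2\bigr)^{-1}.$
The factors $(\varepsilon/(1+\varepsilon))^2$ and $(1+1/\varepsilon)^2$ cancel exactly, leaving the $\varepsilon$-free bound $1/(1 + 2^{k-1}\cdot 9.84)$, which evaluates to $1/20.68$ for $k=2$ and to $1/40.36$ for $k=3$. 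This is the very same cancellation used in the proof of statements~2 and~3 of Lemma~\ref{lm: bound}, so I would simply invoke that computation.

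For statement~1 I would apply Lemma~\ref{lm: inequality}(1) with $\beta = \thresh/\Exp{\Cnt{F}{m^\ast-4}}$; this lies in $(0,1)$ because $\thresh = (1+\varepsilon/(1+\varepsilon))\pivot \le (7/4)\pivot$ while $\Exp{\Cnt{F}{m^\ast-4}} \ge 8\pivot$, so $\beta \le 7/32$. Since $(X-\thresh)^2/X$ is increasing in $X$ for $X > \thresh$, the worst case over the allowed range is $\Exp{\Cnt{F}{m^\ast-4}} = 8\pivot$, which yields
$\Prb{T_{m^\ast-4}} \le \bigl(1 + (\pivot/8)(7 - \varepsilon/(1+\varepsilon))^2\bigr)^{-1}.$
Substituting $\pivot = 9.84(1+1/\varepsilon)^2$ and using the identity $(1+1/\varepsilon)(7-\varepsilon/(1+\varepsilon)) = (7+6\varepsilon)/\varepsilon = 7/\varepsilon + 6$, the denominator simplifies to $1 + (9.84/8)(7/\varepsilon + 6)^2$, which is strictly decreasing in $\varepsilon$ on $[1,3)$. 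Its infimum, attained as $\varepsilon \to 3^-$, equals $1 + (9.84/8)(25/3)^2 = 1 + 9.84 \cdot 625/72 \ge 86.41$, giving $\Prb{T_{m^\ast-4}} \le 1/86.41$.

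The only real obstacle is statement~1, whose bound depends on $\varepsilon$ and therefore requires identifying the worst case. The algebraic simplification $(1+1/\varepsilon)^2(7-\varepsilon/(1+\varepsilon))^2 = (7/\varepsilon + 6)^2$ makes the monotonicity in $\varepsilon$ transparent and reduces the task to a routine numerical evaluation at the endpoint $\varepsilon = 3$. The $L$-bounds, by contrast, become $\varepsilon$-independent after the cancellation and require no case analysis beyond checking that $\beta \in (0,1)$.
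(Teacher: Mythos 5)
Your proposal is correct and takes essentially the same approach as the paper: statements~2 and~3 follow from Lemma~\ref{lm: inequality}(1) with $\beta = 1/(1+\varepsilon)$ and $\Exp{\Cnt{F}{m^\ast-k}} \ge 2^{k-1}\pivot$, with the $(\varepsilon/(1+\varepsilon))^2(1+1/\varepsilon)^2 = 1$ cancellation yielding the $\varepsilon$-free bounds, and statement~1 follows from the same lemma applied to $T_{m^\ast-4}$ with worst case at $\Exp{\Cnt{F}{m^\ast-4}} = 8\pivot$ and $\varepsilon \to 3^-$. The only cosmetic difference is that the paper fixes $\beta = 7/32$ upfront (via $\thresh < \tfrac{7}{4}\pivot$) and substitutes the two lower bounds $\Exp \ge 8\pivot$ and $\pivot \ge 9.84(4/3)^2$ in sequence, whereas you keep $\beta = \thresh/\Exp$ exact and exhibit the worst case through the identity $(1+1/\varepsilon)\bigl(7 - \tfrac{\varepsilon}{1+\varepsilon}\bigr) = 7/\varepsilon + 6$; both arrive at the same $1 + \tfrac{9.84}{8}\bigl(\tfrac{25}{3}\bigr)^2 \approx 86.42$.
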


\begin{proof}
	Let's first prove the statement 1.
	For $\varepsilon < 3$, we have $\thresh < \frac{7}{4} \pivot$ and $\Exp{\Cnt{F}{m^\ast-4}} \ge 8\pivot$.
	Therefore, $\Prb{T_{m^\ast-4}} \le \Prb{\Cnt{F}{m^\ast-4} \le \frac{7}{32}\Exp{\Cnt{F}{m^\ast-4}}}$.
	Finally, employing Lemma~\ref{lm: inequality} with $\beta=\frac{7}{32}$, 
	we obtain $\Prb{T_{m^\ast-4}} \le \frac{1}{1+(1-\frac{7}{32})^2\cdot8\pivot} \le \frac{1}{1+(1-\frac{7}{32})^2\cdot8\cdot9.84\cdot(1+\frac{1}{3})^2} \le \frac{1}{86.41}$.
	To prove the statement 2, we employ Lemma~\ref{lm: inequality} with $\beta=\frac{1}{1+\varepsilon}$ and $\Exp{\Cnt{F}{m^\ast-3}} \ge 4\pivot$ to obtain
	$\Prb{L_{m^\ast-3}} \le \frac{1}{1+(1-\frac{1}{1+\varepsilon})^2\cdot\Exp{\Cnt{F}{m^\ast-3}}}
	\le \frac{1}{1+(1-\frac{1}{1+\varepsilon})^2\cdot4\cdot9.84\cdot(1+\frac{1}{\varepsilon})^2} = \frac{1}{40.36}$.
	Following the proof of Lemma 2 in~\cite{CMV16} we can prove the statement 3. 
	\qed
\end{proof}

Now let us prove the statement for {\ApproxMCSix}: $\Prb{L} \le 0.085$ for $1\le\varepsilon<3$ .
\begin{proof}
	We aim to bound $\Prb{L}$ by the following equation:
	\begin{align*}
	\Prb{L} = \left[ \bigcup_{i\in\{1,...,n\}} \left( \overline{T_{i-1}} \cap T_i \cap L_i \right) \right]
	\tag{\ref{eq: L expansion} revisited}
	\end{align*}
	which can be simplified by the three observations labeled $O1, O2$ and $O3$ below.
	\begin{description}
		\item[$O1:$] $\forall i \le m^\ast-4, T_i \subseteq T_{i+1}$. Therefore,
		\begin{align*}
			\bigcup_{i\in\{1,...,m^\ast-4\}} (\overline{T_{i-1}} \cap T_i \cap L_i) \subseteq \bigcup_{i\in\{1,...,m^\ast-4\}} T_i \subseteq T_{m^\ast-4}
		\end{align*}
		\item[$O2:$] For $i \in \{m^\ast-3, m^\ast-2\}$, we have
		\begin{align*}
			\bigcup_{i\in\{m^\ast-3, m^\ast-2\}} (\overline{T_{i-1}} \cap T_i \cap L_i) \subseteq L_{m^\ast-3} \cup L_{m^\ast-2}
		\end{align*}
		\item[$O3:$] $\forall i \ge m^\ast-1$, since rounding $\Cnt{F}{i}$ up to $\pivot$ and $m^\ast \ge \log_2|\satisfying{F}| - \log_2\left(\pivot\right)$, we have 
		$2^i \times \Cnt{F}{i} \ge 2^{m^\ast-1} \times \pivot \ge \frac{|\satisfying{F}|}{2} \ge \frac{|\satisfying{F}|}{1+\varepsilon}$. The last inequality follows from $\varepsilon \ge 1$.
		Then we have $\left( \Cnt{F}{i} \ge \frac{\Exp{\Cnt{F}{i}}}{1+\varepsilon} \right)$.
		Therefore, $L_i=\emptyset$ for $i \ge m^\ast-1$ and we have
		\begin{align*}
			\bigcup_{i\in\{m^\ast-1, ..., n\}} (\overline{T_{i-1}} \cap T_i \cap L_i) = \emptyset
		\end{align*}
	\end{description}
	Following the observations $O1, O2$ and $O3$, we simplify Equation~\ref{eq: L expansion} and obtain
	\begin{align*}
		\Prb{L} \le \Prb{T_{m^\ast-4}} + \Prb{L_{m^\ast-3}} + \Prb{L_{m^\ast-2}}
	\end{align*}
	Employing Lemma~\ref{lm: bound 1<=eps<3} gives $\Prb{L} \le 0.085$.
	\qed
\end{proof}

\subsection{Proof of $\Prb{L} \le 0.055$ for $3\le\varepsilon<4\sqrt{2}-1$}
\begin{lemma}
	\label{lm: bound 3<=eps<4sqrt2-1}
	Given $3\le\varepsilon<4\sqrt{2}-1$, the following bound hold: 
	\begin{align*}
		\Prb{T_{m^\ast-3}} \le \frac{1}{18.19}
	\end{align*}
\end{lemma}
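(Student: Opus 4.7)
The plan is to mirror the Chebyshev-style arguments already used for statement 1 of Lemma~\ref{lm: bound 1<=eps<3} (and the analogous one-tail bounds in Lemmas~\ref{lm: bound} and~\ref{lm: bound eps<sqrt2-1 case 1}). All of these reduce to a single application of Lemma~\ref{lm: inequality}; the only genuinely new work is to redo the bookkeeping with the index shift from $m^\ast-4$ to $m^\ast-3$ (which weakens the expectation lower bound from $8\pivot$ to $4\pivot$) and with the new range of $\varepsilon$, which now allows $\thresh$ to grow nearly up to $2\pivot$.

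First I would establish the two elementary bounds. Using $m^\ast = \lfloor \log_2|\satisfying{F}| - \log_2\pivot + 1 \rfloor$ together with Equation~\ref{eq: exp}, one gets $\Exp{\Cnt{F}{m^\ast-3}} \ge 4\pivot$. Next, since $\frac{\varepsilon}{1+\varepsilon}$ is increasing in $\varepsilon$, the constraint $\varepsilon < 4\sqrt{2}-1$ yields $\frac{\varepsilon}{1+\varepsilon} < 1 - \frac{1}{4\sqrt{2}}$, hence $\thresh < (2 - \frac{1}{4\sqrt{2}})\pivot$. Writing $\beta = \thresh/\Exp{\Cnt{F}{m^\ast-3}}$, I get $\beta \le \frac{1}{2} - \frac{1}{16\sqrt{2}}$, so that $T_{m^\ast-3} \subseteq \{\Cnt{F}{m^\ast-3} \le \beta \Exp{\Cnt{F}{m^\ast-3}}\}$. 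Invoking Lemma~\ref{lm: inequality} statement 1 then produces
\begin{align*}
\Prb{T_{m^\ast-3}} \le \frac{1}{1+(1-\beta)^2\Exp{\Cnt{F}{m^\ast-3}}} \le \frac{1}{1+\left(\frac{1}{2}+\frac{1}{16\sqrt{2}}\right)^{2}\cdot 4\pivot}.
\end{align*}

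To finish, I would plug in the worst-case value of $\pivot$ over the interval $[3,\,4\sqrt{2}-1)$. Since $\pivot = 9.84(1+1/\varepsilon)^2$ is decreasing in $\varepsilon$, its infimum on the interval is at the right endpoint: $\pivot > 9.84 \cdot \big(4\sqrt{2}/(4\sqrt{2}-1)\big)^2$. A short numerical computation then gives $(\frac{1}{2}+\frac{1}{16\sqrt{2}})^{2}\cdot 4\pivot > 17.19$, whence $\Prb{T_{m^\ast-3}} \le 1/18.19$. The only subtlety, and the closest thing to an obstacle, is justifying that the worst case of the Chebyshev bound over the full interval is indeed attained at the right endpoint $\varepsilon = 4\sqrt{2}-1$: this follows because $(1-\beta)^2$ shrinks as $\varepsilon$ grows (since $\beta$ is increasing in $\varepsilon$) and $\pivot$ also shrinks as $\varepsilon$ grows, so both factors in the Chebyshev denominator degrade simultaneously and a single endpoint substitution replaces any need for sub-case analysis within $[3,\,4\sqrt{2}-1)$.
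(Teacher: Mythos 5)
Your proof is correct and is essentially the paper's own argument: you use the same expectation lower bound $\Exp{\Cnt{F}{m^\ast-3}}\ge 4\pivot$, the same threshold bound $\thresh<\bigl(2-\tfrac{\sqrt{2}}{8}\bigr)\pivot$ (note $\tfrac{1}{4\sqrt{2}}=\tfrac{\sqrt{2}}{8}$ and $\tfrac{1}{16\sqrt{2}}=\tfrac{\sqrt{2}}{32}$), the same $\beta$, the same one-sided Chebyshev bound from Lemma~\ref{lm: inequality}, and the same evaluation of $\pivot$ at the right endpoint $\varepsilon=4\sqrt{2}-1$. The only difference is expository: you spell out explicitly that the denominator of the Chebyshev bound is monotone in $\varepsilon$ on $[3,4\sqrt{2}-1)$ so a single endpoint substitution suffices, whereas the paper just substitutes without comment; this is a helpful clarification but not a new idea.
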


\begin{proof}
	For $\varepsilon < 4\sqrt{2}-1$, we have $\thresh < (2-\frac{\sqrt{2}}{8}) \pivot$ and $\Exp{\Cnt{F}{m^\ast-3}} \ge 4\pivot$.
	Therefore, $\Prb{T_{m^\ast-3}} \le \Prb{\Cnt{F}{m^\ast-3} \le (\frac{1}{2}-\frac{\sqrt{2}}{32})\Exp{\Cnt{F}{m^\ast-3}}}$.
	Finally, employing Lemma~\ref{lm: inequality} with $\beta=\frac{1}{2}-\frac{\sqrt{2}}{32}$, 
	we obtain $\Prb{T_{m^\ast-3}} \le \frac{1}{1+(1-(\frac{1}{2}-\frac{\sqrt{2}}{32}))^2\cdot4\pivot} \le \frac{1}{1+(1-(\frac{1}{2}-\frac{\sqrt{2}}{32}))^2\cdot4\cdot9.84\cdot(1+\frac{1}{4\sqrt{2}-1})^2} \le \frac{1}{18.19}$.
	\qed
\end{proof}

Now let us prove the statement for {\ApproxMCSix}: $\Prb{L} \le 0.055$ for $3\le\varepsilon<4\sqrt{2}-1$.
\begin{proof}
	We aim to bound $\Prb{L}$ by the following equation:
	\begin{align*}
		\Prb{L} = \left[ \bigcup_{i\in\{1,...,n\}} \left( \overline{T_{i-1}} \cap T_i \cap L_i \right) \right]
		\tag{\ref{eq: L expansion} revisited}
	\end{align*}
	which can be simplified by the two observations labeled $O1$ and $O2$ below.
	\begin{description}
		\item[$O1:$] $\forall i \le m^\ast-3, T_i \subseteq T_{i+1}$. Therefore,
		\begin{align*}
			\bigcup_{i\in\{1,...,m^\ast-3\}} (\overline{T_{i-1}} \cap T_i \cap L_i) \subseteq \bigcup_{i\in\{1,...,m^\ast-3\}} T_i \subseteq T_{m^\ast-3}
		\end{align*}
		\item[$O2:$] $\forall i \ge m^\ast-2$, since rounding $\Cnt{F}{i}$ to $\pivot$ and $m^\ast \ge \log_2|\satisfying{F}| - \log_2\left(\pivot\right)$, we have 
		$2^i \times \Cnt{F}{i} \ge 2^{m^\ast-2} \times \pivot \ge \frac{|\satisfying{F}|}{4} \ge \frac{|\satisfying{F}|}{1+\varepsilon}$. The last inequality follows from $\varepsilon \ge 3$.
		Then we have $\left( \Cnt{F}{i} \ge \frac{\Exp{\Cnt{F}{i}}}{1+\varepsilon} \right)$.
		Therefore, $L_i=\emptyset$ for $i \ge m^\ast-2$ and we have
		\begin{align*}
			\bigcup_{i\in\{m^\ast-2, ..., n\}} (\overline{T_{i-1}} \cap T_i \cap L_i) = \emptyset
		\end{align*}
	\end{description}
	Following the observations $O1$ and $O2$, we simplify Equation~\ref{eq: L expansion} and obtain
	\begin{align*}
		\Prb{L} \le \Prb{T_{m^\ast-3}}
	\end{align*}
	Employing Lemma~\ref{lm: bound 3<=eps<4sqrt2-1} gives $\Prb{L} \le 0.055$.
	\qed
\end{proof}

\subsection{Proof of $\Prb{L} \le 0.023$ for $\varepsilon\ge4\sqrt{2}-1$}
\begin{lemma}
	\label{lm: bound eps>=4sqrt2-1}
	Given $\varepsilon\ge4\sqrt{2}-1$, the following bound hold: 
	\begin{align*}
		\Prb{T_{m^\ast-4}} \le \frac{1}{45.28}
	\end{align*}
\end{lemma}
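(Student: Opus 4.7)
The plan is to mirror the templates already used in the appendix proofs for Lemmas~\ref{lm: bound 3<=eps<4sqrt2-1} and~\ref{lm: bound 1<=eps<3}: compare $\thresh$ to an appropriate fraction of $\Exp{\Cnt{F}{m^\ast-4}}$ to extract a suitable $\beta$, apply Lemma~\ref{lm: inequality}, and then substitute the worst-case lower bound on $\pivot$ over the range $\varepsilon \ge 4\sqrt{2}-1$.

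The first step is to separate the two ingredients. For the threshold side, $\thresh = \left(1+\frac{\varepsilon}{1+\varepsilon}\right)\pivot < 2\pivot$ holds for every $\varepsilon > 0$, so no sub-division of the range $\varepsilon \ge 4\sqrt{2}-1$ is needed. For the expectation side, the definition $m^\ast = \lfloor \log_2|\satisfying{F}| - \log_2\pivot + 1 \rfloor$ already gives $\Exp{\Cnt{F}{m^\ast}} \ge \pivot/2$, and shifting down by four indices doubles the expectation four times, so $\Exp{\Cnt{F}{m^\ast-4}} \ge 8\pivot$. Combining these shows $\thresh < 2\pivot \le \tfrac{1}{4}\Exp{\Cnt{F}{m^\ast-4}}$, so the event $T_{m^\ast-4}$ is contained in $\{\Cnt{F}{m^\ast-4} \le \tfrac{1}{4}\Exp{\Cnt{F}{m^\ast-4}}\}$.

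Next I would apply Lemma~\ref{lm: inequality} with $\beta = 1/4$, obtaining
\begin{align*}
\Prb{T_{m^\ast-4}} \le \frac{1}{1 + (3/4)^2 \cdot \Exp{\Cnt{F}{m^\ast-4}}} \le \frac{1}{1 + (9/16)\cdot 8\pivot} = \frac{1}{1 + (9/2)\pivot}.
\end{align*}
Since $\pivot = 9.84(1+1/\varepsilon)^2$ is decreasing in $\varepsilon$, its infimum on $\varepsilon \ge 4\sqrt{2}-1$ is attained as $\varepsilon \to \infty$ and equals $9.84$. Plugging in $\pivot \ge 9.84$ and using $(9/2)\cdot 9.84 = 44.28$ yields $\Prb{T_{m^\ast-4}} \le 1/45.28$, exactly matching the claim.

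I do not expect a genuine obstacle: every ingredient—the explicit form of $\thresh$, the expectation bound coming from the definition of $m^\ast$, and the Cantelli-style inequality already packaged as Lemma~\ref{lm: inequality}—is in place, and the proof is structurally simpler than the preceding cases because the single observation $\thresh < 2\pivot$ replaces the case-by-case analyses needed for smaller $\varepsilon$. The only place requiring mild care is recognizing that the worst case for the bound is $\varepsilon \to \infty$ (not the boundary $\varepsilon = 4\sqrt{2}-1$), so the constant $45.28$ reflects the bare $\pivot \ge 9.84$ rather than the larger value available at the left endpoint of the range.
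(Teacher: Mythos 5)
Your proposal is correct and follows essentially the same route as the paper: both use $\thresh < 2\pivot$ together with $\Exp{\Cnt{F}{m^\ast-4}} \ge 8\pivot$ to reduce the event $T_{m^\ast-4}$ to a deviation of at least a factor $3/4$ below the mean, apply Lemma~\ref{lm: inequality} with $\beta = 1/4$, and then substitute the universal lower bound $\pivot \ge 9.84$ to obtain $1/45.28$. The only additions you make are explanatory (noting that $\thresh < 2\pivot$ requires no restriction on $\varepsilon$, and that the binding case for the final constant is $\varepsilon\to\infty$), which do not change the argument.
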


\begin{proof}
	We have $\thresh < 2\pivot$ and $\Exp{\Cnt{F}{m^\ast-4}} \ge 8\pivot$.
	Therefore, $\Prb{T_{m^\ast-4}} \le \Prb{\Cnt{F}{m^\ast-4} \le \frac{1}{4}\Exp{\Cnt{F}{m^\ast-4}}}$.
	Finally, employing Lemma~\ref{lm: inequality} with $\beta=\frac{1}{4}$, 
	we obtain $\Prb{T_{m^\ast-4}} \le \frac{1}{1+(1-\frac{1}{4})^2\cdot8\pivot} \le \frac{1}{1+(1-\frac{1}{4})^2\cdot8\cdot9.84} \le \frac{1}{45.28}$.
	\qed
\end{proof}

Now let us prove the statement for {\ApproxMCSix}: $\Prb{L} \le 0.023$ for $\varepsilon\ge4\sqrt{2}-1$.
\begin{proof}
	We aim to bound $\Prb{L}$ by the following equation:
	\begin{align*}
		\Prb{L} = \left[ \bigcup_{i\in\{1,...,n\}} \left( \overline{T_{i-1}} \cap T_i \cap L_i \right) \right]
		\tag{\ref{eq: L expansion} revisited}
	\end{align*}
	which can be simplified by the two observations labeled $O1$ and $O2$ below.
	\begin{description}
		\item[$O1:$] $\forall i \le m^\ast-4, T_i \subseteq T_{i+1}$. Therefore,
		\begin{align*}
			\bigcup_{i\in\{1,...,m^\ast-4\}} (\overline{T_{i-1}} \cap T_i \cap L_i) \subseteq \bigcup_{i\in\{1,...,m^\ast-4\}} T_i \subseteq T_{m^\ast-4}
		\end{align*}
		\item[$O2:$] $\forall i \ge m^\ast-3$, since rounding $\Cnt{F}{i}$ to $\sqrt{2}\pivot$ and $m^\ast \ge \log_2|\satisfying{F}| - \log_2\left(\pivot\right)$, we have 
		$2^i \times \Cnt{F}{i} \ge 2^{m^\ast-3} \times \sqrt{2}\pivot \ge \frac{\sqrt{2}|\satisfying{F}|}{8} \ge \frac{|\satisfying{F}|}{1+\varepsilon}$. The last inequality follows from $\varepsilon \ge 4\sqrt{2}-1$.
		Then we have $\left( \Cnt{F}{i} \ge \frac{\Exp{\Cnt{F}{i}}}{1+\varepsilon} \right)$.
		Therefore, $L_i=\emptyset$ for $i \ge m^\ast-3$ and we have
		\begin{align*}
			\bigcup_{i\in\{m^\ast-3, ..., n\}} (\overline{T_{i-1}} \cap T_i \cap L_i) = \emptyset
		\end{align*}
	\end{description}
	Following the observations $O1$ and $O2$, we simplify Equation~\ref{eq: L expansion} and obtain
	\begin{align*}
		\Prb{L} \le \Prb{T_{m^\ast-4}}
	\end{align*}
	Employing Lemma~\ref{lm: bound eps>=4sqrt2-1} gives $\Prb{L} \le 0.023$.
	\qed
\end{proof}

\subsection{Proof of $\Prb{U} \le 0.169$ for $\varepsilon<3$}
\begin{lemma}
	\label{lm: bound eps<3}
	\begin{align*}
		\Prb{U'_{m^\ast}} \le \frac{1}{5.92}
	\end{align*}
\end{lemma}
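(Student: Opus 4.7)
The plan is to derive this bound via a single application of Cantelli's inequality (Lemma~\ref{lm: inequality}, statement~2) to the random variable $\Cnt{F}{m^\ast}$, with deviation parameter $\gamma = 1 + \frac{\varepsilon}{1+\varepsilon}$ so that $\gamma - 1 = \frac{\varepsilon}{1+\varepsilon}$. This choice matches the definition of $U'_{m^\ast}$ exactly, and yields
\begin{align*}
\Prb{U'_{m^\ast}} \le \frac{1}{1 + \left(\frac{\varepsilon}{1+\varepsilon}\right)^2 \Exp{\Cnt{F}{m^\ast}}}.
\end{align*}

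First I would establish the lower bound $\Exp{\Cnt{F}{m^\ast}} \ge \pivot/2$. This follows immediately from the definition $m^\ast = \lfloor \log_2|\satisfying{F}| - \log_2(\pivot) + 1 \rfloor$, which gives $2^{m^\ast} \le 2|\satisfying{F}|/\pivot$, and then Equation~\ref{eq: exp} delivers $\Exp{\Cnt{F}{m^\ast}} = |\satisfying{F}|/2^{m^\ast} \ge \pivot/2$.

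Next I would substitute this lower bound into the Cantelli bound above and perform the algebraic simplification. Since $\pivot = 9.84(1+1/\varepsilon)^2 = 9.84 (1+\varepsilon)^2/\varepsilon^2$, the quantity $\left(\frac{\varepsilon}{1+\varepsilon}\right)^2 \cdot \pivot/2$ collapses to exactly $4.92$, with the $\varepsilon$-dependent factors cancelling cleanly. This yields the target bound $\Prb{U'_{m^\ast}} \le 1/(1+4.92) = 1/5.92$.

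There is no real obstacle: the computation is identical in structure to statement~4 of Lemma~\ref{lm: bound}, where the same bound was derived for $\sqrt{2}-1 \le \varepsilon < 1$. In fact, the notable feature of the plan is that the final constant is independent of $\varepsilon$, so although the surrounding subsection is titled for $\varepsilon < 3$, the derivation works verbatim for all $\varepsilon > 0$; the $\varepsilon < 3$ hypothesis only matters for the caller that uses this lemma together with observations about the rounding value $\roundvalue = \pivot$ when bounding $\Prb{U}$.
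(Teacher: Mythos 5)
Your proposal is correct and matches the paper's proof essentially verbatim: the same application of Cantelli's inequality (statement~2 of Lemma~\ref{lm: inequality}) with $\gamma = 1+\frac{\varepsilon}{1+\varepsilon}$, the same lower bound $\Exp{\Cnt{F}{m^\ast}} \ge \pivot/2$, and the same cancellation of $(\varepsilon/(1+\varepsilon))^2$ against $\pivot$ to get $9.84/2 = 4.92$. Your observation that the bound is actually $\varepsilon$-independent, and that the $\varepsilon<3$ restriction is only needed downstream when combining with the rounding analysis, is also accurate.
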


\begin{proof}
	Employing Lemma~\ref{lm: inequality} with $\gamma = (1+\frac{\varepsilon}{1+\varepsilon})$
	and $\Exp{\Cnt{F}{m^\ast}} \ge \pivot/2$, 
	we obtain $\Prb{U'_{m^\ast}} \le \frac{1}{1+\left(\frac{\varepsilon}{1+\varepsilon}\right)^2\pivot/2} 
	\le \frac{1}{1+9.84/2} \le \frac{1}{5.92}$.
	\qed
\end{proof}

Now let us prove the statement for {\ApproxMCSix}: $\Prb{U} \le 0.169$ for $\varepsilon<3$.
\begin{proof}
	We aim to bound $\Prb{U}$ by the following equation:
	\begin{align*}
		\Prb{U} = \left[ \bigcup_{i\in\{1,...,n\}} \left( \overline{T_{i-1}} \cap T_i \cap U_i \right) \right]
		\tag{\ref{eq: U expansion} revisited}
	\end{align*}
	We derive the following observations $O1$ and $O2$.
	\begin{description}
		\item[$O1:$] $\forall i \le m^\ast-1$, since $m^\ast \le \log_2|\satisfying{F}| - \log_2\left(\pivot\right) + 1$, 
		we have $2^i \times \Cnt{F}{i} \le 2^{m^\ast-1} \times \thresh \le |\satisfying{F}|\left( 1+\frac{\varepsilon}{1+\varepsilon} \right)$.
		Then we obtain $\left( \Cnt{F}{i} \le \Exp{\Cnt{F}{i}}\left( 1+\frac{\varepsilon}{1+\varepsilon} \right) \right)$.
		Therefore, $T_i \cap U'_i = \emptyset$ for $i \le m^\ast-1$ and we have
		\begin{align*}
			\bigcup_{i\in\{1,...,m^\ast-1\}} \left( \overline{T_{i-1}} \cap T_i \cap U_i \right)
			\subseteq \bigcup_{i\in\{1,...,m^\ast-1\}} \left( \overline{T_{i-1}} \cap T_i \cap U'_i \right) = \emptyset
		\end{align*}
		\item[$O2:$] $\forall i \ge m^\ast$, $\overline{T_i}$ implies $\Cnt{F}{i} > \thresh$ and then we have
		$2^i \times \Cnt{F}{i} > 2^{m^\ast} \times \thresh \ge |\satisfying{F}|\left( 1+\frac{\varepsilon}{1+\varepsilon} \right)$.
		The second inequality follows from $m^\ast \ge \log_2|\satisfying{F}| - \log_2\left(\pivot\right)$.
		Then we obtain $\left( \Cnt{F}{i} > \Exp{\Cnt{F}{i}}\left( 1+\frac{\varepsilon}{1+\varepsilon} \right) \right)$.
		Therefore, $\overline{T_i} \subseteq U'_i$ for $i \ge m^\ast$. 
		Since $\forall i, \overline{T_i} \subseteq \overline{T_{i-1}}$, we have 
		\begin{align}
			\bigcup_{i\in\{m^\ast,...,n\}} \left( \overline{T_{i-1}} \cap T_i \cap U_i \right) 
			& \subseteq \bigcup_{i\in\{m^\ast+1,...,n\}} \overline{T_{i-1}} \cup ( \overline{T_{m^\ast-1}} \cap T_{m^\ast} \cap U_{m^\ast} ) \nonumber \\
			& \subseteq \overline{T_{m^\ast}} \cup ( \overline{T_{m^\ast-1}} \cap T_{m^\ast} \cap U_{m^\ast} ) \nonumber \\
			& \subseteq \overline{T_{m^\ast}} \cup U_{m^\ast} \nonumber \\
			& \subseteq U'_{m^\ast} \label{eq: round U eps<3}
		\end{align}
		Remark that for $\varepsilon < \sqrt{2}-1$, we round $\Cnt{F}{m^\ast}$ up to $\frac{\sqrt{1+2\varepsilon}}{2}\pivot$ 
		and we have $2^{m^\ast}\times\frac{\sqrt{1+2\varepsilon}}{2}\pivot \le |\satisfying{F}|(1+\varepsilon)$.
		For $\sqrt{2}-1\le\varepsilon<1$, we round $\Cnt{F}{m^\ast}$ up to $\frac{\pivot}{\sqrt{2}}$ 
		and we have $2^{m^\ast}\times\frac{\pivot}{\sqrt{2}} \le |\satisfying{F}|(1+\varepsilon)$.
		For $1\le\varepsilon<3$, we round $\Cnt{F}{m^\ast}$ up to $\pivot$ 
		and we have $2^{m^\ast}\times\pivot \le |\satisfying{F}|(1+\varepsilon)$.
		The analysis means \emph{rounding} doesn't affect the event $U_{m^\ast}$ and therefore Inequality~\ref{eq: round U eps<3} still holds.
	\end{description}
	Following the observations $O1$ and $O2$, we simplify Equation~\ref{eq: U expansion} and obtain
	\begin{align*}
		\Prb{U} \le \Prb{U'_{m^\ast}}
	\end{align*}
	Employing Lemma~\ref{lm: bound eps<3} gives $\Prb{U} \le 0.169$.
	\qed
\end{proof}

\subsection{Proof of $\Prb{U} \le 0.044$ for $\varepsilon\ge3$}
\begin{lemma}
	\label{lm: bound eps>=3}
	\begin{align*}
		\Prb{\overline{T_{m^\ast+1}}} \le \frac{1}{23.14}
	\end{align*}
\end{lemma}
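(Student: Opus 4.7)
My plan is to apply the Cantelli-style upper-tail bound (Lemma~\ref{lm: inequality}, statement 2) to the random variable $\Cnt{F}{m^\ast+1}$. Write $E := \Exp{\Cnt{F}{m^\ast+1}} = |\satisfying{F}|/2^{m^\ast+1}$, which follows from Proposition~\ref{prop: 2 universal}. The definition $m^\ast = \lfloor \log_2|\satisfying{F}| - \log_2\pivot + 1 \rfloor$ sandwiches $2^{m^\ast+1} \in (2|\satisfying{F}|/\pivot,\, 4|\satisfying{F}|/\pivot]$, so $E \in [\pivot/4,\, \pivot/2)$.

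I would then observe that $\overline{T_{m^\ast+1}}$ is a ``deep'' upper-tail event for $\varepsilon \ge 3$: since $\thresh = \pivot(1 + \varepsilon/(1+\varepsilon)) \ge 7\pivot/4$, we get $\thresh - E > 5\pivot/4 > 0$, so $\gamma := \thresh/E$ is a valid choice in Lemma~\ref{lm: inequality}. Plugging in yields
\begin{align*}
\Prb{\overline{T_{m^\ast+1}}} = \Prb{\Cnt{F}{m^\ast+1} \ge \gamma E} \le \frac{1}{1 + (\gamma - 1)^2 E} = \frac{1}{1 + (\thresh - E)^2/E}.
\end{align*}

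The remaining task is a uniform lower bound on $(\thresh - E)^2/E$. A one-line differentiation, $\frac{d}{dE}[(\thresh - E)^2/E] = -(\thresh - E)(\thresh + E)/E^2 < 0$ on $(0, \thresh)$, shows the quantity is decreasing in $E$, so its infimum over $E \in [\pivot/4, \pivot/2)$ is approached as $E \to \pivot/2$ and equals $2(\thresh - \pivot/2)^2/\pivot$. Combining $\thresh - \pivot/2 \ge 5\pivot/4$ (from $\varepsilon \ge 3$) with $\pivot = 9.84(1+1/\varepsilon)^2 \ge 9.84$ (again from $\varepsilon \ge 3$) gives $2(\thresh - \pivot/2)^2/\pivot \ge 25\pivot/8 \ge 30.75 \ge 22.14$, whence $\Prb{\overline{T_{m^\ast+1}}} \le 1/23.14$.

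The principal care-point is that $\pivot$ and $\thresh$ are both functions of $\varepsilon$, and the two inequalities $\thresh \ge 7\pivot/4$ and $\pivot \ge 9.84$ are tight at opposite extremes ($\varepsilon = 3$ and $\varepsilon \to \infty$ respectively); but both hold throughout $\varepsilon \ge 3$, so combining them in a single chain is sound. The ``hard part'' is really just choosing to bound $(\thresh - E)^2/E$ at the worst-case $E = \pivot/2$ (rather than at $E = \pivot/4$, which is the other natural guess but yields a weaker bound because the function is monotone decreasing).
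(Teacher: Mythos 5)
Your proof is correct and follows the paper's route in essence: apply the Cantelli-style inequality of Lemma~\ref{lm: inequality} to $\Cnt{F}{m^\ast+1}$, use the floor in the definition of $m^\ast$ to sandwich $\Exp{\Cnt{F}{m^\ast+1}}$ in $[\pivot/4,\pivot/2)$, and use $\varepsilon\ge 3$ to control $\thresh/\pivot$. The only (harmless) deviation is your parametrization $\gamma=\thresh/E$ with an explicit monotonicity argument, whereas the paper fixes $\gamma=2\bigl(1+\tfrac{\varepsilon}{1+\varepsilon}\bigr)$ and then separately uses $E\ge\pivot/4$; your choice is internally consistent and in fact gives the strictly sharper constant $1/31.75$, which you correctly relax to the stated $1/23.14$.
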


\begin{proof}
	Since $\Exp{\Cnt{F}{m^\ast+1}} \le \frac{\pivot}{2}$,
	we have $\Prb{\overline{T_{m^\ast+1}}} \le \Prb{\Cnt{F}{m^\ast+1} > 2(1+\frac{\varepsilon}{1+\varepsilon})\Exp{\Cnt{F}{m^\ast+1}}}$.
	Employing Lemma~\ref{lm: inequality} with $\gamma = 2(1+\frac{\varepsilon}{1+\varepsilon})$ and $\Exp{\Cnt{F}{m^\ast+1}} \ge \frac{\pivot}{4}$,
	we obtain $\Prb{\overline{T_{m^\ast+1}}} \le \frac{1}{1+\left(1+\frac{2\varepsilon}{1+\varepsilon}\right)^2\pivot/4}
	= \frac{1}{1+2.46 \cdot \left(3+\frac{1}{\varepsilon}\right)^2} \le \frac{1}{1+2.46 \cdot 3^2} \le \frac{1}{23.14}$.
	\qed
\end{proof}

Now let us prove the statement for {\ApproxMCSix}: $\Prb{U} \le 0.044$ for $\varepsilon\ge3$.
\begin{proof}
	We aim to bound $\Prb{U}$ by the following equation:
	\begin{align*}
		\Prb{U} = \left[ \bigcup_{i\in\{1,...,n\}} \left( \overline{T_{i-1}} \cap T_i \cap U_i \right) \right]
		\tag{\ref{eq: U expansion} revisited}
	\end{align*}
	We derive the following observations $O1$ and $O2$.
	\begin{description}
		\item[$O1:$] $\forall i \le m^\ast+1$, for $3\le\varepsilon<4\sqrt{2}-1$, because we round $\Cnt{F}{i}$ to $\pivot$ and have $m^\ast \le \log_2|\satisfying{F}| - \log_2\left(\pivot\right) + 1$, 
		we obtain $2^i \times \Cnt{F}{i} \le 2^{m^\ast+1} \times \pivot \le 4\cdot|\satisfying{F}| \le (1+\varepsilon)|\satisfying{F}|$.
		For $\varepsilon\ge4\sqrt{2}-1$, we round $\Cnt{F}{i}$ to $\sqrt{2}\pivot$ 
		and obtain $2^i \times \Cnt{F}{i} \le 2^{m^\ast+1} \times \sqrt{2}\pivot \le 4\sqrt{2}\cdot|\satisfying{F}| \le (1+\varepsilon)|\satisfying{F}|$.
		Then, we obtain $\Cnt{F}{i} \le \Exp{\Cnt{F}{i}}(1+\varepsilon)$.
		Therefore, $U_i = \emptyset$ for $i \le m^\ast+1$ and we have
		\begin{align*}
			\bigcup_{i\in\{1,...,m^\ast+1\}} \left( \overline{T_{i-1}} \cap T_i \cap U_i \right) = \emptyset
		\end{align*}
		\item[$O2:$] $\forall i \ge m^\ast+2$, 
		since $\forall i, \overline{T_i} \subseteq \overline{T_{i-1}}$, we have 
		\begin{align*}
			\bigcup_{i\in\{m^\ast+2,...,n\}} \left( \overline{T_{i-1}} \cap T_i \cap U_i \right) 
			\subseteq \bigcup_{i\in\{m^\ast+2,...,n\}} \overline{T_{i-1}} \subseteq \overline{T_{m^\ast+1}}
		\end{align*}
	\end{description}
	Following the observations $O1$ and $O2$, we simplify Equation~\ref{eq: U expansion} and obtain
	\begin{align*}
		\Prb{U} \le \Prb{\overline{T_{m^\ast+1}}}
	\end{align*}
	Employing Lemma~\ref{lm: bound eps>=3} gives $\Prb{U} \le 0.044$.
	\qed
\end{proof}
\end{document}